\providecommand{\SetAlgoLined}{\SetLine}
\providecommand{\DontPrintSemicolon}{\dontprintsemicolon}
\newtheorem{theorem}{Theorem}
\newtheorem{lemma}[theorem]{Lemma}
\newtheorem{corollary}{Corollary}
\newtheorem{remark}{Remark}
\theoremstyle{definition}
\newtheorem{definition}{Definition}
 \newcommand{\add}[2][]{\todo[linecolor=blue,backgroundcolor=blue!25,bordercolor=blue,#1]{#2}}
 \newcommand{\Add}[2][]{\todo[inline,linecolor=blue,backgroundcolor=blue!25,bordercolor=blue,#1]{#2}}
\newcommand{\dtrain}{\mathcal{D}_{\it train}}
\DeclareMathOperator*{\argmin}{argmin}
\DeclareMathOperator*{\argmax}{argmax}
\newtheorem{assumption}{Assumption}
\newcommand{\RR}{\mathbb{R}} 
\newcommand{\EE}{\mathbb{E}} 
\newcommand{\Wcal}{\mathcal{W}}
\newcommand{\need}{{\normalfont{\text{tailor}}}}
\newcommand{\cont}{{\normalfont{\text{cont}}}}
\newcommand{\un}{{\normalfont{\text{un}}}}
\newcommand{\Lcal}{\mathcal{L}}
\newcommand{\Xcal}{\mathcal{X}}
\newcommand{\Rcal}{\mathcal{R}}
\newcommand{\Fcal}{\mathcal{F}}
\newcommand{\thetax}{{\theta_x}}
\newcommand{\thetaxi}{{\theta_{x_i}}}
\newcommand{\Ycal}{\mathcal{Y}}
\newcommand{\zetat}{\tilde \zeta}
\newcommand{\suploss}{\mathcal{L}^{\normalfont{\textrm{sup}}}}
\newcommand{\unsuploss}{\mathcal{L}^\textrm{unsup}}
\newcommand{\tailorloss}{\mathcal{L}^{\normalfont{\textrm{tailor}}}}
\newcommand{\tailor}{\tau}
\newcommand\ex[2]{#1^{(#2)}}
\newcommand{\supdata}{((x_i, y_i))_{i=1}^n}
\newcommand{\unsupdata}{(\ex{x}{j})_j}
\newcommand{\transoutput}{(\ex{\hat{y}}{j})_j}
\newcommand{\method}{{\sc CNGrad}}
\definecolor{darkolivegreen}{rgb}{0,.8,0.}
\newcommand{\greenx}{\textbf{\textcolor{darkolivegreen} x}} 
\title{Tailoring: encoding inductive biases by optimizing unsupervised objectives at prediction time}
\author{%
  Ferran Alet, Maria Bauza, Kenji Kawaguchi,\\ \textbf{Nurullah Giray Kuru, Tom\'as Lozano-P\'erez, Leslie Pack Kaelbling}\\
 MIT\\
  \texttt{\{alet,bauza,kawaguch,ngkuru,tlp,lpk\}@mit.edu} \\
}
\begin{document}

\maketitle
\vspace{-3mm}
\begin{abstract}
From CNNs to attention mechanisms, encoding inductive biases into neural networks has been a fruitful source of improvement in machine learning. Adding auxiliary losses to the main objective function is a general way of encoding biases that can help networks learn better representations. However, since auxiliary losses are minimized only on training data, they suffer from the same generalization gap as regular task losses. Moreover, by adding a term to the loss function, the model optimizes a different objective than the one we care about. In this work we address both problems: first, we take inspiration from \textit{transductive learning} and note that after receiving an input but before making a prediction, we can fine-tune our networks on any unsupervised loss. We call this process {\em tailoring}, because we customize the model to each input to ensure our prediction satisfies the inductive bias. Second, we formulate {\em meta-tailoring}, a nested optimization similar to that in meta-learning, and train our models to perform well on the task objective after adapting them using an unsupervised loss. The advantages of tailoring and meta-tailoring are discussed theoretically and demonstrated empirically on a diverse set of examples.~\looseness=-1
\end{abstract}
\vspace{-3mm}

\section{Introduction} \label{sec:intro}

The key to successful generalization in machine learning is the encoding of useful inductive biases.  A variety of mechanisms, from parameter tying to data augmentation, have proven useful to improve the performance of models. 
Among these, auxiliary losses 
can encode a wide variety of biases, constraints, and objectives; helping networks learn better representations and generalize more broadly. Auxiliary losses add an extra term to the task loss that is minimized over the training data. 

However, they have two major problems: 
\vspace{-2mm}
\begin{enumerate}
    \item Auxiliary losses are only minimized at training time, but not for the query points. This leads to a generalization gap between training and testing, in addition to that of the task loss.
    \item By minimizing the sum of the task loss plus the auxiliary loss, we are optimizing a different objective than the one we care about (only the task loss).
\end{enumerate}
\vspace{-2mm}
In this work we propose a solution to each problem:
\vspace{-2mm}
\begin{enumerate}
    \item We use ideas from \textit{transductive learning} to minimize unsupervised auxiliary losses at each query, thus
    eliminating their generalization gap. 
    Because these losses are unsupervised, we can optimize them at any time inside the prediction function. We call this process \textit{tailoring}, since we customize the model to each query. 
    \item We use ideas from \textit{meta-learning} to learn a model that performs well on the task loss after being tailored with the unsupervised auxiliary loss. 
    \textit{Meta-tailoring} effectively trains the model to leverage the unsupervised tailoring loss in order to minimize the task loss.
\end{enumerate} 
\vspace{-5mm}
\begin{figure*}
    \centering
    \includegraphics[width=\linewidth]{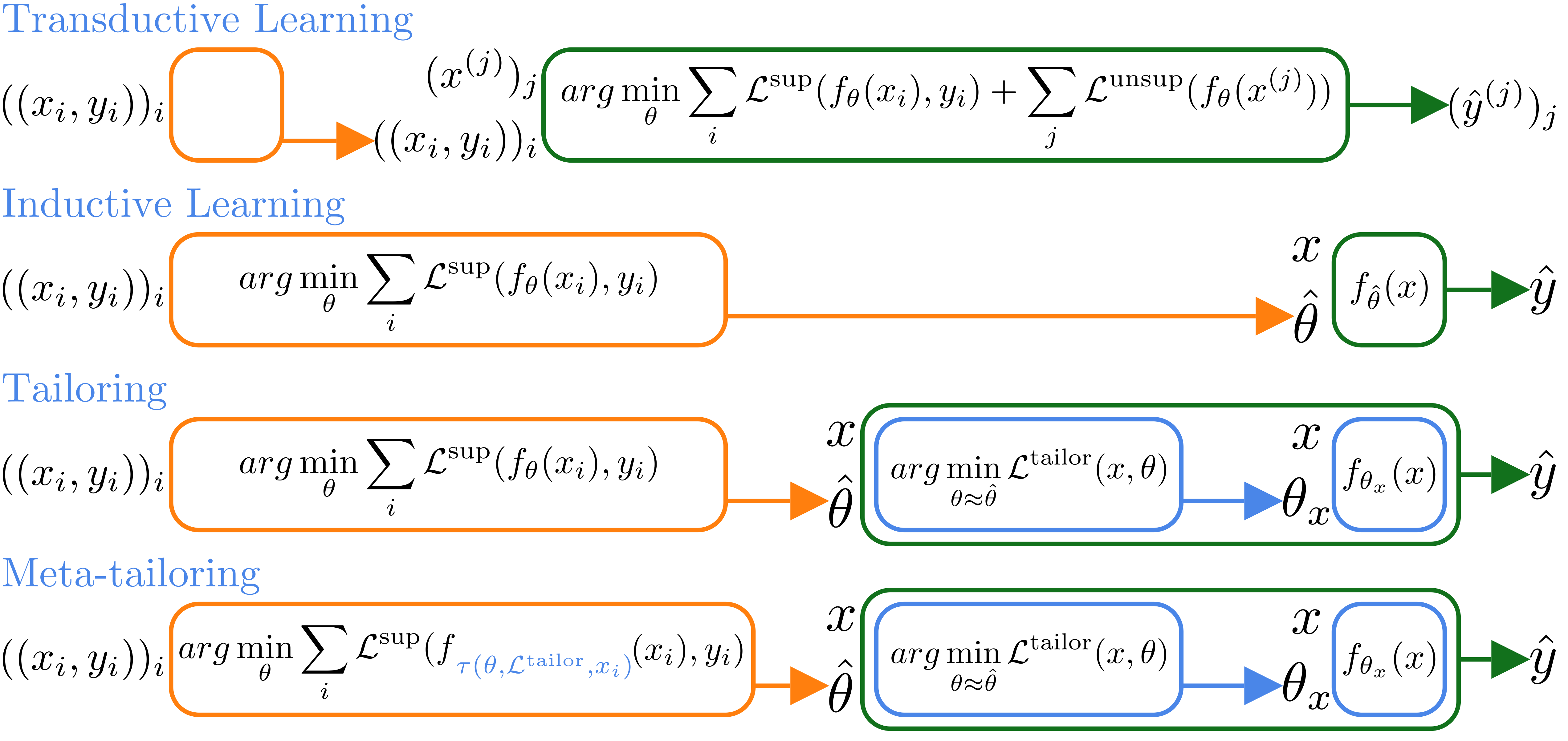}
    \caption{Comparison of several learning settings with \textit{offline}
      computation in the orange boxes and \textit{online} computation in the
      green boxes, with tailoring in blue. For meta-tailoring training, $\tau(\theta, \tailorloss,  x) = \argmin_{\theta'\approx \theta} \tailorloss(x, \theta')$ represents the tailoring process resulting in $\theta_x$. }
    \label{fig:main_diagram}
\end{figure*}
\paragraph{Tailoring a predictor}
Traditionally, supervised learning is approached within the inductive learning framework, shown in the second row of Figure ~\ref{fig:main_diagram}. There, an algorithm consumes a training dataset
of input-output pairs, $\supdata$, and produces a set of parameters
$\hat{\theta}$ by minimizing a supervised loss $\sum_{i=1}^n
\suploss(f_\theta(x_i), y_i)$ and, optionally, an unsupervised auxiliary loss $\sum_{i=1}^n\unsuploss(\theta,x_i)$. These parameters
specify a hypothesis $f_{\hat{\theta}}(\cdot)$ that, given a new input
$x$, generates an output $\hat{y} = f_{\hat{\theta}}(x)$. This problem setting misses a substantial opportunity: before the
learning algorithm sees the query point $x$, it has distilled the data
down to the parameters $\hat{\theta}$, which are frozen during inference, and so
 it cannot use new information about the \textit{particular} $x$ that it will be asked to make a prediction for.

Vapnik recognized an opportunity to make more accurate predictions
when the query point is known, in a framework that is now known as
\textit{transductive
  learning}~\citep{vapnik1995nature,chapelle2000transductive}, illustrated in the top row of Figure~\ref{fig:main_diagram}.  In
transductive learning, a single algorithm consumes both
labeled data, $\supdata$, and a set of input queries for which
predictions are desired, $\unsupdata$, and produces predicted outputs
$\transoutput$ for each query. 
In general, however, we do not know
queries \textit{a priori}, and instead, we want an inductive function that
makes predictions online, as queries arrive. 
To obtain such an online
prediction function from a transductive system, we would need to 
encapsulate the entire transductive learning procedure inside the prediction function itself.  This strategy would achieve our objective of taking $x$ into account at prediction time but would be computationally much too slow~\citep{chapelle2009semi}.~\looseness=-1 

This approach for combining induction and transduction would reuse the same training data and objective for each prediction, only changing the single unlabeled query. Consequently, it would perform extremely similar computations for each prediction.
Therefore, we propose to effectively reuse the shared computations and find a ``meta-hypothesis'' that can then be
efficiently adapted to each query. 
As shown in the third row of Figure~\ref{fig:main_diagram}, we
first run 
regular supervised learning to obtain parameters $\hat{\theta}$. Then,
given a query input $x$, we fine-tune $\hat{\theta}$ on an
unsupervised loss $\tailorloss$ to obtain customized
parameters $\theta_x$ and use them to make the final prediction:
$f_{\theta_x}(x)$.   
We call this process \textit{tailoring}, because we adapt the model to
each particular input for a customized fit. Notice that tailoring optimizes the loss at the query input, eliminating the generalization gap on the unsupervised auxiliary loss.~\looseness=-1 
\paragraph{Meta-tailoring}
Since we will be applying tailoring at prediction time, it is natural
to incorporate this adaptation during training, resulting in a
two-layer optimization similar to those used in meta-learning. Because of
this similarity, we call this process \textit{meta-tailoring}, illustrated in the bottom row
of Figure~\ref{fig:main_diagram}.  Now,
rather than letting $\hat{\theta}$ be the direct minimizer of the
supervised loss, we set it to
$$\hat{\theta} \in \text{arg}\min_\theta \sum_{i=1}^n \suploss(f_{\tailor(\theta, \tailorloss, x_i)}(x_i), y_i).$$
Here, the inner loop optimizes the unsupervised tailoring loss $\tailorloss$ and the outer loop optimizes the supervised task loss $\suploss$.
Notice that now the outer process optimizes the only objective we care, $\suploss$, instead of a proxy combination of $\suploss$ and $\unsuploss$. At the same time, we learn to leverage $\tailorloss$ in the inner loop to affect the model before making the final prediction, both during training and evaluation. Adaptation is especially clear in the case of MAML~\citep{finn2017model} when the adaptation is a step of gradient descent. We show its translation, MAMmoTh (Model-Agnostic Meta-Tailoring), in algorithm~\ref{alg:mammoth}. ~\looseness=-1

In many settings, we want to make predictions for a large
number of queries in a (mini-)batch. While MAMmoTh adapts to every input separately, it does not run in parallel efficiently for most DL frameworks.
Inspired by conditional normalization~(CN)~\citep{dumoulin2016learned} we propose \method, which adds element-wise affine transformations to our model and only adapts
the added parameters 
in the inner loop.  
This allows us to independently \textit{tailor} the model for multiple inputs in
parallel. 
We
prove theoretically, in Sec.~\ref{sec:algorithm}, and provide
experimental evidence, in Sec.~\ref{subsec:planets}, that
optimizing these parameters alone has enough capacity to minimize
a large class of tailoring losses.

\setlength{\textfloatsep}{2mm}
\begin{algorithm}[t]
\SetAlgoLined
\DontPrintSemicolon
\begin{flushleft}
  \SetKwFunction{algo}{algo}\SetKwFunction{proc}{proc}
  \SetKwProg{myalg}{Algorithm}{}{}
  \SetKwProg{myproc}{Subroutine}{}{}
  \myproc{Training($f$, $\suploss$, $\lambda_{sup}$, $\tailorloss$, $\lambda_{tailor}$, $\dtrain$,$b$)}{
  randomly initialize $\theta$\;
  
  \While{not done}{
    Sample batch of samples $(x_i,y_i) \sim \dtrain$\;
    
    \ForAll(\tcp*[f]{This loop can't be parallelized in most DL frameworks}){$(x_i,y_i)$}{
        
        $\theta_{x_i} = \theta - \lambda_{tailor}\nabla_{\theta}\tailorloss(\theta,x_i)$\; \tcp*[r]{Inner step with tailor loss}
        
    }
    $\theta = \theta - \lambda_{sup}\nabla_\theta \sum_{(x_i,y_i)} \suploss\left(f_{\theta_{x_i}}(x_i), y_i\right)$\; \tcp*[r]{Outer step with supervised loss}
  }
  \Return $\theta$\;
  }
  \end{flushleft}
\caption{\textbf{MAM}mo\textbf{T}h: \textbf{M}odel-\textbf{A}gnostic \textbf{M}eta-\textbf{T}ailoring \label{alg:mammoth}}
\end{algorithm}

\paragraph{Relation between (meta-)tailoring, fine-tuning transfer, and meta-learning}

Fine-tuning pre-trained networks is a fruitful method of transferring knowledge from large corpora to smaller related datasets~\citep{donahue2014decaf}. Fine-tuning allows reusing features on related tasks or for different distributions of the same task. When the data we want to adapt to is unlabeled, we must use unsupervised losses. This can be useful to adapt to changes of task~\citep{dhillon2019baseline}, from simulated to real data~\citep{wu2017marrnet}, or to new  distributions~\citep{sun2019test}.~\looseness=-1 

Tailoring performs unsupervised fine-tuning and is, in this sense, similar to test-time training(TTT)~\citep{sun2019test} for a single sample, which adapts to distribution shifts.  However, tailoring is applied to a single query; not to a data set that captures distribution shift, where batched TTT sees most of its benefits. Thus, whereas fine-tuning benefits from more adaptation data, tailoring is hindered by more data. This is because tailoring aims at building a custom model for each query to ensure the network satisfies a particular inductive bias. Customizing the model to multiple samples makes it harder, not easier. We show this in Figure~\ref{fig:planet-results}, where TTT with 6400 samples performs worse than tailoring with a single sample. Furthermore, tailoring adapts to each query one by one, not globally from training data to test data. Therefore, it also makes sense to do tailoring on training queries (i.e. meta-tailoring). 

Meta-tailoring has the same two-layer optimization structure as meta-learning. More concretely, it can be understood as the extreme case of meta-learning where each single-query prediction is its own task.
However, whereas meta-learning tasks use one loss and different examples for the inner and outer loop, meta-tailoring tasks use different losses and one example for each loop~($\tailorloss,\suploss$). 
We emphasize that meta-tailoring does not operate in the typical multi-task meta-learning setting. 
Instead, we are leveraging techniques from meta-learning for the classical single-task ML setting.
    \paragraph{Contributions}
In summary, our contributions are:~\looseness=-1
\begin{enumerate} \vspace{-6pt}
    \item Introducing \textit{tailoring}, a new framework for encoding inductive biases by minimizing unsupervised losses at prediction time, with theoretical guarantees and broad potential applications.~\looseness=-1
    \item Formulating \textit{meta-tailoring}, which adjusts the outer objective to optimize only the task loss,
    and developing a new algorithm, \method, for efficient meta-tailoring. ~\looseness=-1
    \item Demonstrating \textit{tailoring} in 3 domains: encoding
    hard and soft conservation laws in physics prediction problems~(Sec.~\ref{subsec:planets} and~Sec.~\ref{subsec:soft}), enhancing
    resistance to adversarial examples by increasing local
    smoothness at prediction time~(Sec.~\ref{sec:adv}), and improving prediction quality both theoretically~(Sec.~\ref{sec:contrastive_theory}) and empirically~(Sec.~\ref{sec:contrastive})
    by tailoring with a contrastive loss.~\looseness=-1
\end{enumerate}

\vspace{-4mm}
\section{Related work} \label{sec:related_work} \vspace{-6pt}
Tailoring is inspired by transductive learning. However, transductive methods, because they operate on a batch of unlabeled queries, are allowed to make use of the underlying distributional properties of those queries, as in semi-supervised learning~\cite{chapelle2009semi}.  
In contrast, tailoring does the bulk of the computations before receiving any query; vastly increasing efficiency. 
Similar to tailoring, local learning~\citep{bottou1992local} 
also has input-dependent parameters. However, it uses similarity in raw input space to select a few labeled data points and builds a local model instead of reusing the global prior learned across the whole data. 
Finally, some methods~\citep{garcia2017few,liu2018learning} in meta-learning propagate predictions along the test samples in a semi-supervised transductive fashion.


Similar to tailoring, there are other learning frameworks that perform optimization at prediction time for very different purposes. Among those, energy-based models do generative modeling~\citep{ackley1985learning,hinton2002training,lecun2006tutorial} by optimizing the hidden activations of neural networks,
 and other models~\citep{amos2017optnet,tschiatschek2018differentiable} learn to solve optimization problems by embedding optimization layers in neural networks. 
 In contrast, tailoring 
 instead optimizes the parameters of the model, not the hidden activations or the output.

As discussed in the introduction, unsupervised fine-tuning methods 
have been proposed to adapt to different types of variations between training and testing. ~\citet{sun2019test} propose to adapt to a change of distribution with few samples by unsupervised fine-tuning at test-time, applying it with a loss of predicting whether the input has been rotated. ~\citet{zhang2020adaptive} build on it to adapt to group distribution shifts with a learned loss. 
%
Other methods in the few-shot meta-learning setting exploit test samples of a new task by minimizing either entropy~\citep{dhillon2019baseline} or a learned loss~\citep{antoniou2019learning} in the inner optimization. Finally, ~\citet{wang2019dynamic} use entropy in the inner optimization to adapt to large-scale variations in image segmentation. 
In contrast, we propose (meta-)tailoring as a general effective way to impose inductive biases in the classic machine learning setting. Whereas in the aforementioned methods adaptation happens from training to testing, we independently adapt to every single query. 

Meta-learning~\citep{schmidhuber1987evolutionary,bengio1995search,thrun2012learning,hospedales2020meta} has the same two-level optimization structure as meta-tailoring but focuses on multiple prediction tasks. 
As shown in Alg.~\ref{alg:mammoth} for MAML~\citep{finn2017model}, most optimization-based meta-learning algorithms can be converted to meta-tailoring. 
Similar to \method, there are other meta-learning methods whose adaptations can be batched~\citep{rakelly2019efficient,alet2019neural}. 
Among these,~\citep{zintgraf2018fast,requeima2019fast} train FiLM networks~\citep{perez2018film} to predict custom conditional normalization~(CN) layers for each task. By optimizing the CN layers directly, \method\ is simpler, while remaining provably expressive~(section~\ref{sec:algorithm}). CNGrad can also start from an trained model by initializing the CN layers to the identity function. 
\vspace{-4mm}
\section{Theoretical motivations of meta-tailoring} \label{sec:theory_meta} 
\vspace{-4mm}
In this section, we study the potential advantages of meta-tailoring from the theoretical viewpoint, formalizing the intuitions conveyed in the introduction. By acting symmetrically during training and prediction time, meta-tailoring allows us to closely relate its training and expected losses, whereas tailoring, in general, may make them less related. First, we analyze the particular case of a contrastive tailoring loss. Then, we will generalize the guarantees to other types of tailoring losses. 
\vspace{-3mm}
\subsection{Meta-tailoring with a contrastive tailoring loss}\label{sec:contrastive_theory} \vspace{-3mm}
\textit{Contrastive learning}~\citep{hadsell2006dimensionality} has seen significant successes in problems of semi-supervised learning~\citep{oord2018representation,he2019momentum,chen2020simple}. The main idea is to create multiple versions of each training image and learn a representation in which  variations of the same image are close while variations of different images are far apart. 
Typical augmentations involve cropping, color distortions, and rotation. We show theoretically that, under reasonable conditions, meta-tailoring using a particular contrastive loss $\Lcal_{\cont}$ as $\tailorloss=\Lcal_{\cont}$ helps us improve generalization errors in expectation compared with performing classical inductive learning.~\looseness=-1


When using meta-tailoring, we define $\theta_{x,S}$ to be the $\theta_x$ obtained with a training dataset $S=((x_i,y_i))_{i=1}^n$ and tailored with the contrastive loss at the prediction point $x$. 
Theorem \ref{thm:cont_2} provides an upper bound on the expected supervised loss $\EE_{x,y}[\suploss(f_{\theta_{x,S}}(x),y)]$ in terms of the expected contrastive loss $\EE_{x}[\Lcal_{\cont}(x,\theta_{x,S})]$ (defined and analyzed in App.~\ref{sec:contrastive_def}), the empirical supervised loss $\frac{1}{n}\sum_{i=1}^{n}\suploss(f_{\theta_{x_i,S}}(x_i),y_i)$ of meta-tailoring, and its uniform stability $\zeta$.
%
Theorem \ref{thm:cont_1} (App.~\ref{sec:all_proofs}) provides a similar bound with the Rademacher complexity  \citep{bartlett2002rademacher}  $\Rcal_n(\suploss \circ \Fcal)$ of the set  $\suploss \circ \Fcal$, instead of using the uniform stability $\zeta$,. 
Proofs of all results in this paper are deferred to App. \ref{sec:all_proofs}.~\looseness=-1

\begin{definition} \label{def:stability}
Let $S=((x_i,y_i))_{i=1}^n$ and $S'=((x_i',y_i'))_{i=1}^n$ be any two training datasets that differ by a single point. Then, a meta-tailoring algorithm $S \mapsto  f_{\theta_{\greenx,S}}(x)$ is \textit{uniformly $\zeta$-stable} if  
$
\forall (x,y) \in \Xcal \times \Ycal, \ |\suploss(f_{\theta_{\greenx,S}}(x),y)-\suploss(f_{\theta_{\greenx,S'}}(x),y)| \le \frac{\zeta}{n}.
$ 
\end{definition}
\begin{theorem} \label{thm:cont_2}
Let $S \mapsto  f_{\theta_{\greenx,S}}(x)$ be a uniformly $\zeta$-stable meta-tailoring algorithm.
Then, for any $\delta>0$, with probability at least $1-\delta$ over  an i.i.d. draw of $n$ i.i.d. samples  $S=((x_i, y_i))_{i=1}^n$, the following holds: for any $\kappa \in [0, 1]$,
$
\EE_{x,y}[\suploss(f_{\theta_{\greenx,S}}(x),y)] \le \kappa \EE_{x}\left[\Lcal_{\cont}^{}(x,\theta^{}_{\greenx,S}) \right]+ (1-\kappa)\mathcal{J}, 
$
where
$
\mathcal{J} = \frac{1}{n}\sum_{i=1}^{n}\suploss(f_{\theta_{\greenx_i,S}}(x_i),y_i)+\frac{\zeta}{n}+(2\zeta+c) \sqrt{(\ln(1/\delta))/(2n)},
$ 
and $c$ is the upper bound on the per-sample loss as $\suploss(f_{\theta}(x), y) \le c$.
\vspace{-2mm}
\end{theorem}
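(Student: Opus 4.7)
The plan is to observe that the stated inequality is a convex combination of two separate upper bounds on $A := \EE_{x,y}[\suploss(f_{\theta_{\greenx,S}}(x),y)]$. If I can establish (I) $A \le B$ where $B := \EE_{x}[\Lcal_{\cont}(x,\theta_{\greenx,S})]$, and (II) $A \le \mathcal{J}$ with probability at least $1-\delta$, then writing $A = \kappa A + (1-\kappa) A$ immediately yields $A \le \kappa B + (1-\kappa)\mathcal{J}$ for every $\kappa \in [0,1]$. So the proof factors into these two pieces.

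For part (I), I would invoke the construction and analysis of $\Lcal_{\cont}$ deferred to Appendix \ref{sec:contrastive_def}. The contrastive loss is built so that, under the conditions flagged at the start of Section \ref{sec:contrastive_theory}, the tailored parameters $\theta_{x,S}$ force the supervised loss at the query point to be pointwise dominated (in expectation over $(x,y)$) by $\Lcal_{\cont}(x,\theta_{x,S})$. Taking expectation over $(x,y)$ then gives $A \le B$; this step is deterministic in $S$, so no probabilistic accounting is needed.

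For part (II), I would replay the classical Bousquet--Elisseeff uniform-stability argument, taking care that it is applied to the meta-tailored predictor. Let $\Phi(S) := A - \frac{1}{n}\sum_{i=1}^n \suploss(f_{\theta_{\greenx_i,S}}(x_i),y_i)$. The three ingredients are: (a) show $\EE_S[\Phi(S)] \le \zeta/n$ by renaming the $i$-th training sample with an independent copy and using Definition \ref{def:stability} to bound the per-coordinate deviation between the expected-loss and empirical-loss summands; (b) verify that $S \mapsto \Phi(S)$ has bounded differences with constant $(2\zeta+c)/n$ per coordinate, since replacing one sample shifts $A$ by at most $\zeta/n$ via stability, shifts the modified empirical summand by at most $c/n$, and shifts each of the remaining $n-1$ empirical summands by at most $\zeta/n$ via stability, which aggregate to $(2\zeta+c)/n$; (c) apply McDiarmid's inequality to conclude $\Phi(S) \le \zeta/n + (2\zeta+c)\sqrt{\ln(1/\delta)/(2n)}$ with probability at least $1-\delta$, which rearranges to $A \le \mathcal{J}$.

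The main obstacle is part (I): it depends on the specific construction of $\Lcal_{\cont}$ and on the precise way tailoring exploits it, both of which are developed in the appendix rather than in the main text, so the conceptual novelty of the argument is concentrated there. Part (II) is routine modulo one subtlety worth flagging, namely that Definition \ref{def:stability} must be (and is) stated in terms of the full meta-tailoring map $S \mapsto f_{\theta_{\greenx,S}}$ rather than the pre-tailoring map $S \mapsto f_{\theta(S)}$; without this, the bounded-differences step would not apply to the quantity appearing in $\Phi(S)$ and the constant $(2\zeta+c)/n$ would not be recoverable.
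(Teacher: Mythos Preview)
Your proposal is correct and follows essentially the same approach as the paper: the paper splits $\EE_{x,y}[\suploss(f_{\theta_{x,S}}(x),y)]$ into a $\kappa$-weighted convex combination, bounds the first piece via a pointwise contrastive lemma (your Part (I), their Lemma~\ref{lemma:contrastive}) and the second via a uniform-stability/McDiarmid argument with exactly the constants you track (your Part (II), their Lemma~\ref{lemma:gen_2}). Your identification of where the real work lies (the appendix construction of $\Lcal_{\cont}$) and of the subtlety that stability is defined on the full meta-tailored map are both on point.
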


 In the case of regular inductive learning, we get a bound of the exact same form, except that we have a single $\theta$ instead of a $\theta_\greenx$ tailored to each input $x$. 
 This theorem illustrates the effect of meta-tailoring on contrastive learning, with its potential reduction of the expected contrastive loss $\EE_{x}[\Lcal_{\cont}(x,\theta_{x,S})]$. In classic induction, we may aim to minimize the empirical contrastive loss $\frac{1}{\bar n}\sum_{i=1}^{\bar n}\Lcal_{\cont}(x_{i},\theta)$ with $\bar n$ potentially unlabeled training samples, 
 which incurs the additional generalization error of  $\EE_{x}[\Lcal_{\cont}(x,\theta_{x,S})]-\frac{1}{\bar n}\sum_{i=1}^{\bar n}\Lcal_{\cont}(x_{i},\theta)$. 
 In contrast, meta-tailoring can avoid this extra generalization error by directly minimizing $\EE_{x}[\Lcal_{\cont}(x,\theta_{x,S})]$.

In the case where $\EE_{x}[\Lcal_{\cont}(x,\theta_{x,S})]$ is left large (e.g., due to large computational cost
), Theorem \ref{thm:cont_2} still illustrates competitive generalization bounds of meta-tailoring with small $\kappa$. 
For example, with $\kappa=0$, it provides generalization bounds with the uniform stability for meta-tailoring algorithms. Even then, the bounds are not equivalent to those of classic induction, and there are potential benefits of meta-tailoring, which are discussed in the following section with a more general setting. 

\vspace{-4mm}
\subsection{Meta-tailoring with general tailoring losses}\label{subsec:theory-motivation} \vspace{-3mm}
The benefits of meta-tailoring go beyond contrastive learning: below we provide generalization bounds for meta-tailoring with any tailoring loss $\tailorloss(x, \theta)$ and any supervised loss $\suploss(f_\theta(x),y)$.~\looseness=-1

\begin{remark} \label{coro:gen_1}
For any function $\varphi$ such that $\EE_{x,y}[\suploss(f_{\theta}(x),y)] \le\EE_{x}[\varphi (\tailorloss(x, \theta))]$, Theorems \ref{thm:cont_2} and \ref{thm:cont_1}  hold with the map $\Lcal_{\cont}$ being replaced by the function $\varphi \circ \tailorloss$.  
\vspace{-2mm}
\end{remark}

This remark shows the benefits of meta-tailoring through its effects on three factors: the expected unlabeled loss $\EE_{x}[\varphi (\tailorloss(x, \theta_{x,S}))]$, uniform stability $\zeta$, and the Rademacher complexity $\Rcal_{n}(\suploss \circ \Fcal)$. It is important to note that meta-tailoring can directly minimize the expected unlabeled loss $\EE_{x}[\varphi (\tailorloss(x, \theta_{x,S}))]$, whereas classic induction can only minimize its empirical version, which results in the additional generalization error on the difference between the expected unlabeled loss and its empirical version. 
For example, if $\varphi$ is monotonically increasing and $\tailorloss(x, \theta)$ represents the physical constraints at each input $x$ (as in the application in section \ref{subsec:planets}), then  classic induction requires the physical constraints of neural networks at the  \textit{training} points to generalize to the physical constraints at \textit{unseen} (e.g., testing) points.
Meta-tailoring avoids this requirement by directly minimizing violations of the physical constraints at each point at prediction time.

Meta-tailoring can also improve the \textit{parameter stability} $\zeta_\theta$ defined such that 
$
\forall (x,y) \in \Xcal \times \Ycal, \| \theta_{x,S}-\theta_{x,S'}\| \le \frac{\zeta_{\theta}}{n},
$ for all $S, S'$ differing by a single point.
When $ \theta_{x,S}=\hat{\theta}_S - \lambda\nabla\tailorloss(x, \hat{\theta}_S)$
, we obtain an improvement on the parameter stability $\zeta_\theta$ if  $\nabla\tailorloss(x, \hat{\theta}_S)$ can pull $\hat{\theta}_S $ and $\hat{\theta}_{S'}$ closer so that
$
\| \theta_{x,S}-\theta_{x,S'} \|<\|\hat{\theta}_S -\hat{\theta}_{S'}\|,
$
which is ensured, for example, if $\|\cdot\|=\|\cdot \|_2$ and $\text{cos\_dist}(v_1,v_2)\frac{\|v_{1}\|}{ \|v_{2}\|}>\frac{1}{2}$ where $\text{cos\_dist}(v_1,v_2)$ is the cosine similarity of $v_1$ and $v_2$, with $v_1=\hat{\theta}_S -\hat{\theta}_{S'}$,  $v_2=\lambda(\nabla\tailorloss(x, \hat{\theta}_S)-\nabla\tailorloss(x, \hat{\theta}_{S'}))$ and $v_2\neq0$. 
Here, the uniform stability $\zeta$ and the parameter stability $\zeta_\theta$ are closely related as $\zeta \le C\zeta_\theta$, where $C$ is the upper bound on the Lipschitz constants of the maps $\theta \mapsto \suploss(f_{\theta}(x),y)$ over all $ (x,y) \in \Xcal \times \Ycal$ under the norm $\|\cdot\|$,  since $|\suploss(f_{\theta_{x,S}}(x),y)-\suploss(f_{\theta_{x,S'}}(x),y)| \le C \| \theta_{x,S}-\theta_{x,S'}\| \le  \frac{C\zeta_\theta}{n}$.



\vspace{-4mm}
\section{\method: a simple algorithm for expressive, efficient (meta-)tailoring}\label{sec:algorithm} \vspace{-2mm}
In this section, we address the issue of using (meta-)tailoring for efficient GPU computations. Although possible in JAX~\cite{jax2018github}, efficiently parallelizing the evaluation of different tailored models is not possible in other frameworks. 
To overcome this issue, building on CAVIA~\citep{zintgraf2018fast} and WarpGrad~\citep{flennerhag2019meta}, we propose \method\ which adapts only \textit{conditional normalization} parameters and enables efficient GPU computations for (meta-)tailoring. 
\method\ can also be used in meta-learning~(see App. ~\ref{app:method}).

As is done in batch-norm~\citep{ioffe2015batch} after element-wise normalization, we can implement an element-wise affine transformation with parameters $(\gamma, \beta)$, scaling and shifting the output $h^{(l)}_k(x)$ of each $k$-th neuron at the $l$-th hidden layer independently: $\gamma^{(l)}_k h^{(l)}_k(x) + \beta^{(l)}_k$. 
In conditional normalization,~\citet{dumoulin2016learned} train a collection of $(\gamma,\beta)$ in a multi-task fashion to learn different tasks with a single network. \method\  brings this concept to meta-learning and (meta-)tailoring settings and adapts the affine parameters $(\gamma,\beta)$ to each query. 
For meta-tailoring, the inner loop minimizes the tailoring loss at an input $x$ by adjusting the affine parameters and the outer optimization adapts the rest of the network. Similar to MAML~\citep{finn2017model}, we implement a first-order version, which does not backpropagate through the optimization, and a second-order version, which does.
\method\  efficiently parallelizes computations of multiple tailored models 
because the adapted parameters only require element-wise multiplications and additions. See Alg.~\ref{alg:maintext_CNGRAD} for the pseudo-code.

\setlength{\textfloatsep}{2mm}
\begin{algorithm}[t]
\SetAlgoLined
\DontPrintSemicolon
\begin{flushleft}
  \SetKwFunction{algo}{algo}\SetKwFunction{proc}{proc}
  \SetKwProg{myalg}{Algorithm}{}{}
  \SetKwProg{myproc}{Subroutine}{}{}
  \myproc(\tcp*[f]{Only in meta-tailoring}){Training($f$, $\suploss$, $\lambda_{sup}$, $\tailorloss$, $\lambda_{tailor}$, $steps$,$\dtrain$,$b$)}{
  randomly initialize $w$ \tcp*[r]{All parameters except $\gamma,\beta$; trained in outer loop}
  \While{not done}{
    $X,Y \sim^b \dtrain; \gamma_0 = \mathbf{1}_{b,\sum_l m_l}; \beta_0 = \mathbf{0}_{b,\sum_l m_l}$\;\tcp*[r]{Sample batch; initialize $\gamma,\beta$}
    
    \For{$1\leq s \leq steps$}{
        $\gamma_s = \gamma_{s-1} - \lambda_{tailor}\nabla_{\gamma}\tailorloss(w,\gamma_{s-1},\beta_{s-1},X)$\; \tcp*[r]{Inner step w.r.t. $\gamma$}

        $\beta_s = \beta_{s-1} - \lambda_{tailor}\nabla_{\beta}\tailorloss(w,\gamma_{s-1},\beta_{s-1},X)$\; \tcp*[r]{Inner step w.r.t. $\beta$}
        
        $\gamma_s,\beta_s = \gamma_s.detach(), \beta_s.detach()\;$ \tcp*[r]{Only in $1^{st}$ order CNGrad}
        
        $grad_w = grad_w+\nabla_w \suploss\left(f_{w,\gamma_s,\beta_s}(X), Y\right)$\; \tcp*[r]{Outer gradient w.r.t. $w$}
        
    }
    $w = w - \lambda_{sup}grad_w$\;\tcp*[r]{Apply outer step after all inner steps}
  }
  \Return $w$\;
  }

  \myproc(\tcp*[f]{Both in meta-tailoring \& tailoring}){Prediction($f$, $w$, $\tailorloss$, $\lambda$, $steps$, $X$)}{
    $\gamma_0 = \mathbf{1}_{X.shape[0],\sum_l m_l}; \beta_0 = \mathbf{0}_{X.shape[0],\sum_l m_l}$\;
    
    \For{$1\leq s \leq steps$}{
        $\gamma_s = \gamma_{s-1} - \lambda\nabla_{\gamma}\tailorloss(w,\gamma_{s-1},\beta_{s-1},X)$\;

        $\beta_s = \beta_{s-1} - \lambda\nabla_{\beta}\tailorloss(w,\gamma_{s-1},\beta_{s-1},X)$\;
    }
  \Return $f_{w,\gamma_{steps},\beta_{steps}}(X)$\;
  }
  \end{flushleft}
\caption{\method \label{alg:maintext_CNGRAD} for meta-tailoring}
\vspace{-0.25cm}
\end{algorithm}

\method\ is widely applicable since the adaptable affine parameters can be added to any hidden layer, and only represent a tiny portion of the network (empirically, $1\%$). Moreover, we can see that, under realistic assumptions, we can minimize the inner tailoring loss using only the affine parameters. 
To analyze properties of these adaptable affine parameters, let us decompose $\theta$  into $\theta=(w,\gamma,\beta)$, where $w$ contains all the weight parameters (including bias terms), and the $(\gamma,\beta)$ contains all the affine parameters. 
Given an arbitrary function $(f_{\theta}(x),x) \mapsto \ell_{\need}(f_{\theta}(x),x)$, 
let $\tailorloss(x,\theta)= \sum_{i=1}^{n_{g}}\ell_{\need}(f_{\theta}(g^{(i)}(x)),x)$, 
where  $g^{(1):(n_g)}$ are arbitrary input augmentation functions at prediction time. 
Note that $n_g$ is typically small ($n_g \ll n$) in meta-tailoring.

Corollary \ref{corollary:expresivity} states that for any given $\hat w$, if we add any non-degenerate Gaussian noise $\delta$ as $\hat w + \delta$ with zero mean and any variance on $ \delta$, the global minimum value of $\tailorloss$ w.r.t. all parameters $(w,\gamma,\beta)$ can be achieved by  optimizing only the affine parameters $(\gamma,\beta)$, with probability one. 

\begin{corollary} \label{corollary:expresivity}
Under the assumptions of Theorem \ref{thm:expresivity}, for any $\hat w \in \RR^d$, with  probability one over  randomly sampled   $\delta \in \RR^d$ accordingly to any non-degenerate Gaussian distribution, the following holds: $\inf_{w,\gamma,\beta} \tailorloss(x,w,\gamma,\beta)=\inf_{\gamma,\beta} \tailorloss(x,\hat w +\delta ,\gamma,\beta)$ for any $x \in \Xcal$.  
\vspace{-2mm}
\end{corollary}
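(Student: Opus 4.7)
The plan is to obtain the corollary as an almost-immediate consequence of Theorem~\ref{thm:expresivity}. That theorem, judging from how it is invoked here, should establish that the identity
\[
\inf_{\gamma,\beta}\tailorloss(x,w,\gamma,\beta)=\inf_{w',\gamma,\beta}\tailorloss(x,w',\gamma,\beta)
\]
holds for every $x \in \Xcal$ whenever $w$ lies in some ``full-measure'' subset $E \subseteq \RR^d$ of weight space, i.e., with the exceptional set $E^c$ having Lebesgue measure zero. Concretely, in expressivity arguments of this kind the bad weights typically form the zero locus of a non-trivial analytic function arising from rank or injectivity conditions on the hidden features induced by $w$, and such loci are automatically Lebesgue null.

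Given this reading of the theorem, my first step would be to observe that for any fixed $\hat w \in \RR^d$, the random variable $\hat w + \delta$ is distributed as a non-degenerate Gaussian translated by $\hat w$, and is therefore absolutely continuous with respect to Lebesgue measure on $\RR^d$. Hence $\Pr(\hat w + \delta \in E^c) = 0$, so with probability one $\hat w + \delta \in E$. Applying Theorem~\ref{thm:expresivity} at $w = \hat w + \delta$ then yields
\[
\inf_{\gamma,\beta}\tailorloss(x,\hat w + \delta,\gamma,\beta)=\inf_{w,\gamma,\beta}\tailorloss(x,w,\gamma,\beta)
\]
for every $x \in \Xcal$, which is exactly the claim of the corollary.

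The hard part is entirely contained in Theorem~\ref{thm:expresivity} itself: one must verify that optimizing only the conditional-normalization parameters $(\gamma,\beta)$ can reach the global infimum of $\tailorloss$ for generic $w$, and characterize the exceptional set $E^c$ sharply enough to conclude it has Lebesgue measure zero. The corollary's own contribution, by contrast, is merely a measure-theoretic upgrade from ``generic $w$'' to ``any $\hat w$ perturbed by Gaussian noise,'' which is routine once one notes that a shifted non-degenerate Gaussian is absolutely continuous with respect to Lebesgue measure and therefore ignores null sets. No quantitative estimate is required; the probability-one conclusion follows from a pure union-of-null-sets argument, and the ``for any $x$'' part comes along for free because the good set $E$ is chosen independently of $x$ in Theorem~\ref{thm:expresivity}.
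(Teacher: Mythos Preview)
Your proposal is correct and matches the paper's own proof essentially verbatim: the paper's argument is the single sentence that a non-degenerate Gaussian measure is absolutely continuous with respect to Lebesgue measure, so Theorem~\ref{thm:expresivity} immediately yields the corollary. Your write-up simply unpacks this inference in more detail, including the observation that the exceptional set $\Wcal$ from Theorem~\ref{thm:expresivity} is Lebesgue-null and hence has Gaussian measure zero after any translation.
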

The assumption and condition in theorem~\ref{thm:expresivity} are satisfied in practice~(see App.~\ref{sec:g_condition_explain}). Therefore, \method\ is a practical and computationally efficient method to implement (meta-)tailoring.~\looseness=-1

\vspace{-3mm}
\section{Experiments}\vspace{-1mm}

\subsection{Tailoring to impose symmetries and constraints at prediction time} \label{subsec:planets}
\vspace{-1mm}
Exploiting invariances and symmetries is an established strategy for increasing performance in ML. 
During training, we can regularize networks to satisfy certain criteria; but this doesn't guarantee they will be satisfied outside the training dataset~\citep{suh2020surprising}. (Meta-)tailoring provides a general solution to this problem by adapting the model at prediction time to satisfy the criteria.
We demonstrate the use of tailoring to enforce physical conservation laws for predicting the evolution of a 5-body planetary system. This prediction problem is challenging, as $m$-body systems become chaotic for $m>2$. We generate a dataset with positions, velocities, and masses of all 5 bodies as inputs and the changes in position and velocity as targets.  App.~\ref{app:planets} further describes the dataset.~\looseness=-1
\begin{figure}
\begin{floatrow}
\capbtabbox{
\begin{tabular}{lcr}
    \toprule[1.5pt]
      \textbf{Method}& loss & relative\\
    \midrule[2pt]
    Inductive learning & .041 &- \\
    Opt. output(50 st.) & .041& (0.7 $\pm$ 0.1)\% \\ 
    6400-s. TTT(50 st.) & .040 & (3.6 $\pm$ 0.2)\% \\
    \midrule
    Tailoring(1 step) & .040& (1.9 $\pm$ 0.2)\% \\
    Tailoring(5 steps) & .039 & (6.3 $\pm$ 0.3)\% \\
    Tailoring(10 st.) & .038 & (7.5  $\pm$ 0.1)\%\\
    Meta-tailoring(0 st.) & .030 & (26.3  $\pm$ 3.3)\%\\ 
    Meta-tailoring(1 st.) & .029 & (29.9  $\pm$ 3.0)\%\\
    Meta-tailoring(5 st.) & .027 & (35.3  $\pm$ 2.6)\%\\
    Meta-tailoring(10 s.) & .026 & (36.0  $\pm$ 2.6)\%\\
    \bottomrule[1.5pt]
  \end{tabular}
}
{
  \caption{Test MSE loss for different methods;  the second column shows the relative improvement over basic inductive supervised learning. The test-time training (TTT) baseline uses a full batch of 6400 test samples to adapt, not allowed in regular SL. With a few gradient steps, tailoring significantly over-performs all baselines. Meta-tailoring improves even further, with $35\%$ improvement.
  }
  \label{table:planets}
}
\ffigbox{
  \includegraphics[width=\linewidth]{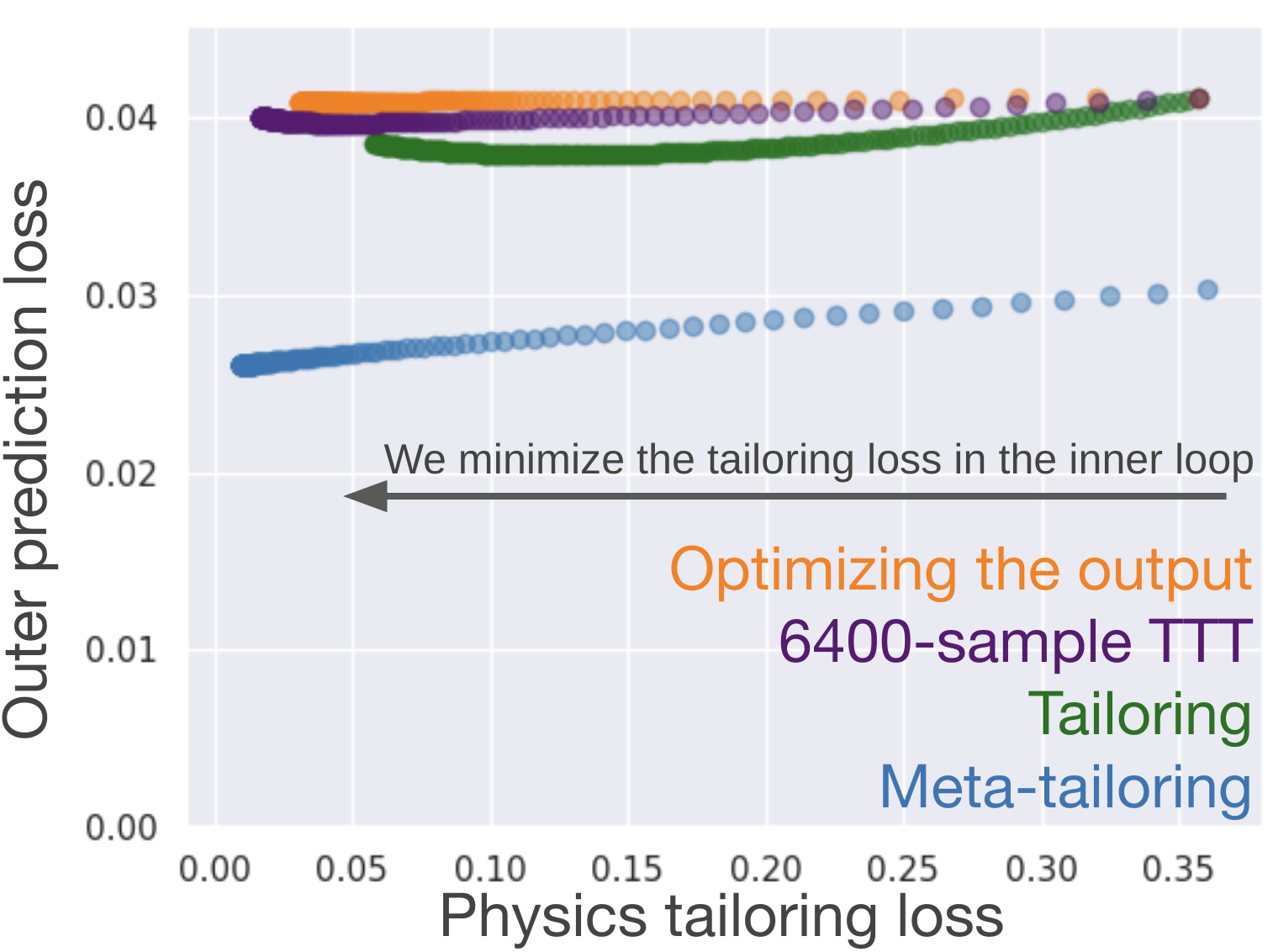}%
  }
  {%
  \caption{
  Optimization at prediction time on the planet data; each path going from right to left as we minimize the physics tailoring loss.  We use a small step size to illustrate the path.
     Tailoring and the two baselines only differ in their test-time computations, thus sharing their starts. Meta-tailoring has a lower starting loss, faster optimization, and no overfitting during tailoring.~\looseness=-1
  \label{fig:planet-results}}%
}
\end{floatrow}
\vspace{-.0cm}
\end{figure}

Our model is a 3-layer feed-forward network. We tailor it by taking the original predictions and adapting the model using the tailoring loss given by the $L_1$ loss between the initial and final energy and momentum of the whole system. Note that ensuring this conservation doesn't guarantee better performance: predicting the input as the output conserves energy and momentum perfectly, but is not correct.~\looseness=-1


While tailoring adapts some parameters in the network to improve the tailoring loss, an alternative for enforcing conservation would be to adapt the output $y$ value directly.  
Table~\ref{table:planets} compares the predictive accuracy of inductive learning, direct output optimization and both tailoring and meta-tailoring, using varying numbers of gradient steps.
Tailoring is more effective than adapting the output, as the parameters provide a prior on what changes are more natural.
For meta-tailoring, we try both first-order and second-order versions of \method: the first-order gave slightly better results, possibly because it was trained with a higher tailor learning rate ($10^{-3}$) with which the second-order version was unstable~(we thus used $10^{-4}$). More details can be found in App.~\ref{app:planets}.~\looseness=-1 

Finally, meta-tailoring without any query-time tailoring steps already performs much better than the original model, even though both have almost the same number of parameters and can overfit the dataset.  
We conjecture meta-tailoring training adds an inductive bias that guides optimization towards learning a more generalizable model. Fig.~\ref{fig:planet-results} shows prediction-time optimization paths.~\looseness=-1


\vspace{-2mm}
\subsection{Tailoring to softly encourage inductive biases} \label{subsec:soft}
\vspace{-1mm}

\begin{wrapfigure}{r}{0.5\linewidth}
 \centering
    \vspace{-3mm}
    \includegraphics[width=\linewidth]{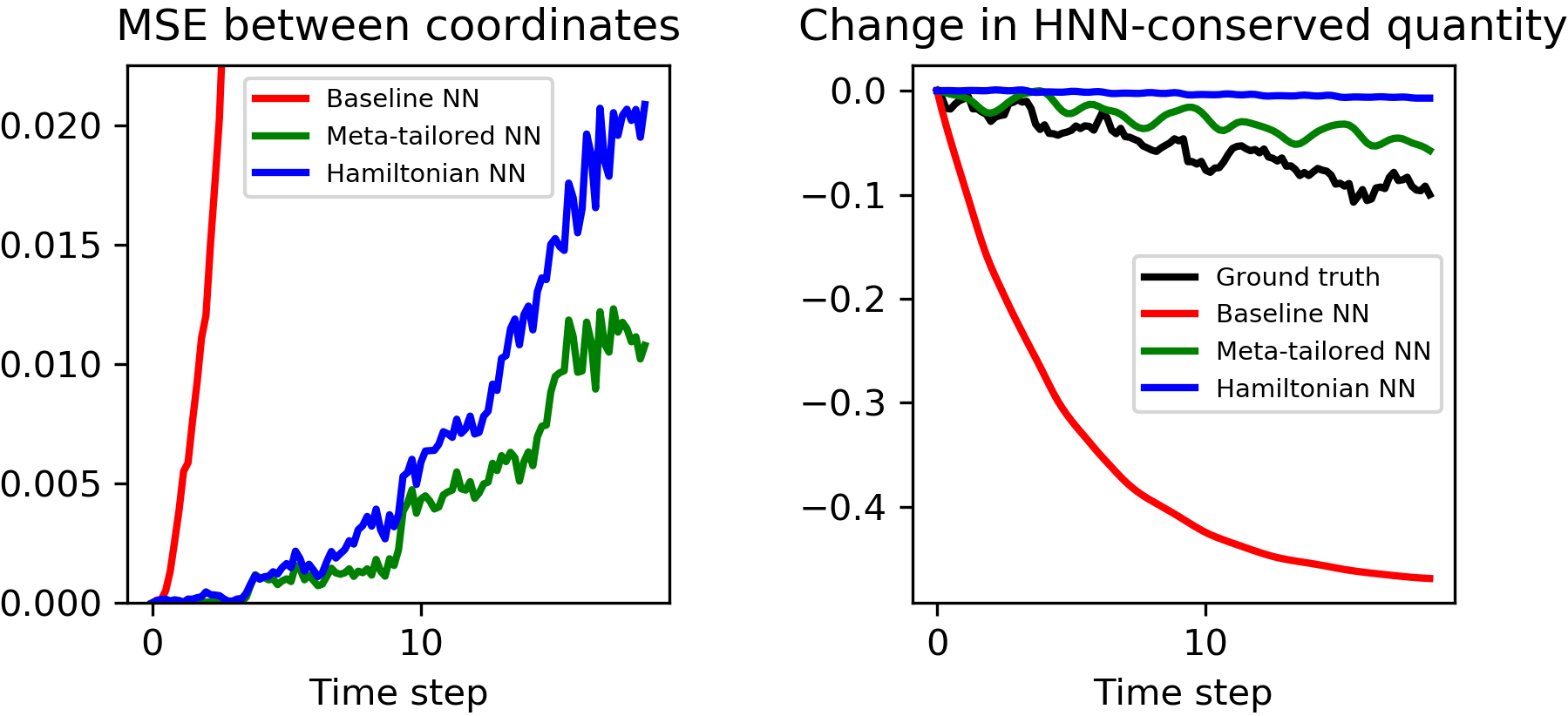}
    \vspace{-0mm}
    \caption{By softly encouraging energy conservation, meta-tailoring improves over models that don't and models that fully impose it.}
    \label{fig:soft_pendulum}
    \vspace{-7mm}
\end{wrapfigure}

A popular way of encoding inductive biases is with clever network design to make predictions translation equivariant~(CNNs), permutation equivariant~(GNNs), or conserve energy~\citep{greydanus2019hamiltonian}. 
However, if an inductive bias is only partially satisfied, such approaches overly constrain the function class. Tailoring instead can softly impose this bias by only fine-tuning the corresponding tailoring loss for a few steps.

We showcase this in the real pendulum experiment used by Hamiltonian Neural Networks~(HNNs)~\citep{greydanus2019hamiltonian}. 
HNNs have energy conservation built-in, and easily improve a vanilla MLP. We meta-tailor this vanilla MLP with energy conservation, without changing its architecture. Meta-tailoring significantly improves over the baseline and HNNs, since it can encode the \textit{imperfect} energy conservation of real systems.
We compare results in Fig.~\ref{fig:soft_pendulum} and provide extra details in App.~\ref{app:soft_pendulum}.
Note that, with inexact losses, fully enforcing them provides sub-optimal results. Thus, we pick the tailoring learning rate that results in the lowest long-term prediction loss during training.~\looseness=-1

\vspace{-2mm}
\subsection{Tailoring with a contrastive loss for image classification}\label{sec:contrastive}

\begin{wrapfigure}{r}{0.4\linewidth}
 \centering
    \includegraphics[width=\linewidth]{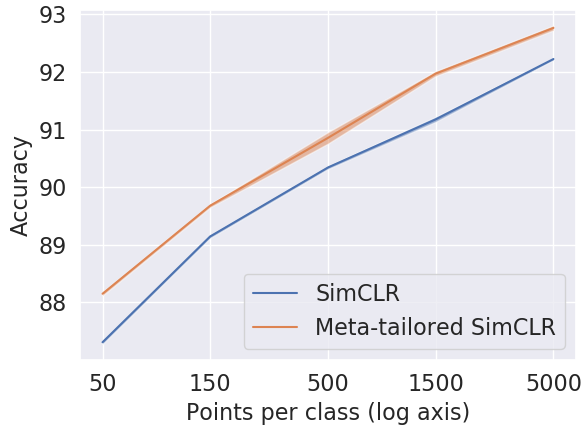}
    
    \caption{Meta-tailoring the linear layer with the contrastive loss results in consistent gains between $0.5\%$ and $0.8\%$ in accuracy. This is approximately the same gain as that of doubling the amount of labeled data (note the logarithmic x-axis).}
    \label{fig:contrastive_results}
    \vspace{-4mm}
\end{wrapfigure}

Following the setting described in section~\ref{subsec:theory-motivation}, we provide experiments on the CIFAR-10 dataset~\citep{krizhevsky2009learning} by building on SimCLR~\citep{chen2020simple}. SimCLR trains a ResNet-50~\citep{he2016deep} $f_\theta(\cdot)$ coupled to a small MLP $g(\cdot)$ such that the outputs of two augmentations of the same image $x_i,x_j\sim\mathcal{T}(x)$ agree; i.e. $g(f_\theta(x_i))\approx g(f_\theta(x_j))$. 
This is done by training $g(f(\cdot))$ to recognize one augmentation from the other among a big batch of candidates with the cross-entropy loss. To show that the unsupervised training of $f_\theta$ provides a useful representation, SimCLR trains a single linear layer on top of it, $\phi(f_\theta(\cdot))$, achieving good classification results.

Building on SimCLR, tailoring $f_{\theta}$ at prediction-time by optimizing $g(f_{\theta_x}(x))$ maximizes agreement between augmentations. 
At train time, meta-tailoring doesn't redo SimCLR unsupervised learning, which provides $\theta$. Its outer loop only trains $\phi$ to take the tailored representations $f_{\theta_x}(x)$. Thus, $\theta$ is unsupervisedly fine-tuned in the prediction function leading to $\theta_x$, but never supervisedly trained as this would break the evaluation protocol~(in meta-tailoring's favor). 
Results in Fig.~\ref{fig:contrastive_results}, TTT~\citep{sun2019test} performed worse than base SimCLR~(see App.).\looseness=-1

\vspace{-2mm}
\subsection{Tailoring for robustness against adversarial examples}\label{sec:adv}

Neural networks are susceptible to adversarial examples~\citep{biggio2013evasion,szegedy2013intriguing}: targeted small perturbations of an input
can cause the network to misclassify it. 
One approach is to make the prediction function smooth via adversarial training~\citep{madry2017towards}; however, this only ensures smoothness in the training points. Constraining the model to be smooth everywhere makes it lose capacity.
Instead, (meta-)tailoring asks for smoothness \textit{a posteriori}, only on a specific query.~\looseness=-1

\begin{table}
\small
\centering
\begin{tabular}{c|l|rrrrrrr|r}
    \toprule[1.5pt]
      $\sigma$ & \textbf{Method}& 0.0 & 0.5& 1.0 & 1.5  & 2.0 & 2.5 & 3.0 & ACR\\
    \midrule[2pt]
    \multirow{2}{*}{0.25} & (Inductive) Randomized Smoothing & 0.67 & 0.49 & 0.00 & 0.00 & 0.00 & 0.00 & 0.00 & 0.470\\
     & Meta-tailored Randomized Smoothing & \textbf{0.72} & \textbf{0.55} & 0.00 & 0.00 & 0.00 & 0.00 & 0.00 & \textbf{0.494}\\
    \midrule[1pt]
    \multirow{2}{*}{0.50}  & (Inductive) Randomized Smoothing& 0.57 & 0.46 & 0.37 & 0.29 & 0.00 & 0.00 & 0.00 & 0.720\\
     & Meta-tailored Randomized Smoothing & 0.66 & 0.54 & \textbf{0.42} & \textbf{0.31} & 0.00 & 0.00 & 0.00 & \textbf{0.819}\\
    \midrule[1pt]
    \multirow{2}{*}{1.00} & (Inductive) Randomized Smoothing & 0.44 & 0.38 & 0.33 & 0.26 & 0.19 & 0.15 & 0.12 & 0.863\\
    & Meta-tailored Randomized Smoothing & 0.52 & 0.45 & 0.36 & \textbf{0.31} & \textbf{0.24} & \textbf{0.20} & \textbf{0.15} & \textbf{1.032}\\
    \bottomrule[2pt]
  \end{tabular}
    \vspace{-2mm}
  \caption{Percentage of points with certificate above different radii, and ACR for ImageNet. Meta-tailoring improves ACR by $5.1\%,13.8\%,19.6\%$. Results for~\citet{cohen2019certified} are taken from~\citep{Zhai2020MACER}.\looseness=-1
  \label{table:imagenet}
  }
\end{table}~\vspace{-5mm}

We apply meta-tailoring to robustly classifying CIFAR-10~\citep{krizhevsky2009learning} and ImageNet~\citep{deng2009imagenet} images, tailoring predictions so that they are locally smooth. Inspired by the notion of adversarial examples being caused by predictive, but non-robust, features~\citep{ilyas2019adversarial}, we meta-tailor our model 
by enforcing smoothness on the vector of features of the penultimate layer~(denoted $g_\theta(x)$):
\vspace{-1mm}
$$
\tailorloss(x,\theta)  = \mathbb{E}[\text{cos\_dist}(g_\theta(x),  g_\theta(x+\delta))], 
\delta\sim N(0,\nu^2),
$$
\vspace{-4mm}

We build on~\citet{cohen2019certified}, who developed a method for certifying the robustness of a model via randomized smoothing~(RS).  RS samples points from a Gaussian $N(x,\sigma^2)$ around the query and, if there is enough agreement in classification, it provides a certificate that the query cannot be adversarially modified by a small perturbation to have a different class.  
We show that meta-tailoring improves the original RS method, testing for $\sigma=0.25,0.5,1.0$. We use $\nu = 0.1$ for all experiments. We initialized with the weights of~\citet{cohen2019certified} by leveraging that \method\ can start from a pre-trained model by initializing the extra affine layers to the identity.
Finally, we use $\sigma' = \sqrt{\sigma^2-\nu^2}\approx 0.23,0.49,0.995$ so that the points used in our tailoring loss come from $N(x,\sigma^2)$.~\looseness=-1 

Table~\ref{table:cifar} shows our results on CIFAR-10 where we improve the average certification radius~(ARC) by $8.6\%,10.4\%,19.2\%$ respectively. In table~\ref{table:imagenet}, we show results on Imagenet where we improve the ARC by $5.1\%,13.8\%,19.6\%$ respectively. We chose to meta-tailor the RS method 
because it represents a strong standard in certified adversarial defenses, but we note that there have been advances on RS that sometimes achieve better results than those presented here~\citep{Zhai2020MACER,salman2019provably}, see App.~\ref{app:adv}. However, likely, meta-tailoring can also improve these methods.~\looseness=-1 


These experiments only scratch the surface of what tailoring allows for adversarial defenses: usually, the adversary looks at the model and gets to pick a particularly bad perturbation $x+\delta$. With tailoring, the model responds, by changing to weights $\theta_{x+\delta}$. This leads to a game, where both weights and inputs are perturbed, similar to $\max_{|\delta|<\epsilon_x}\min_{|\Delta|<\epsilon_\theta} \suploss\left( f_{\theta+\Delta}(x+\delta), y\right)$. However, since we don't get to observe $y$; we optimize the weight perturbation by minimizing $\tailorloss$ instead.

\vspace{-6pt}
\section{Discussion}\label{sec:conclusion} \vspace{-6pt}
\subsection{Broader Impact}~\label{subsec:impact}
\uline{Improving adversarial robustness:} having more robust and secure ML systems is mostly a positive change. However, improving adversarial defenses could also go against privacy preservation. \uline{Encoding desirable properties:} By optimizing an unsupervised loss for the particular query we care about, it is easier to have guarantees on the prediction. In particular, there could be potential applications for fairness, where the unsupervised objective could enforce specific criteria at the query or related inputs. More research needs to be done to make this assertion formal and practical. \quad
\uline{Potential effect on privacy:} tailoring specializes the model to each input. This could  have an impact on privacy. Intuitively, the untailored model can be less specialized to each input, lowering the individual information from each training point contained in the model. However, tailored predictions have more information about the queries, from which more personal information could be leaked. 


\vspace{-6pt}
\subsection{Limitations}\label{sec:limitations} \vspace{-6pt}
Tailoring provides a framework for encoding a wide array of inductive biases, but these need to be specified as a formula by the user. For instance, it would be hard to codify tailoring losses in raw pixel data, such as mass conservation in pixel space. Tailoring also incurs an extra time cost at prediction time, since we make an inner optimization inside the prediction function. However, as shown in Table~\ref{table:planets}, meta-tailoring often achieves better results than inductive learning even without adaptation at test-time, enabling predictions at regular speed during test-time. This is due to meta-tailoring leading to better training. Moreover, optimization can be sped up by only tailoring the last layers, as discussed in App.~\ref{app:method}. Finally, to the best of our knowledge a naive implementation of tailoring would be hard to parallelize in PyTorch~\citep{paszke2019pytorch} and Tensorflow~\citep{abadi2016tensorflow}; we proposed \method~ to make it easy and efficient. JAX\citep{jax2018github}, which better handles per-example weights, makes parallelizing tailoring effortless.~\looseness=-1

Theory in Sec.~\ref{sec:theory_meta} applies only for meta-tailoring. Unlike tailoring (and test-time training), meta-tailoring performs the same computations at training and testing time, which allows us to prove the results. Theorem~\ref{thm:expresivity} proves that optimizing the CN layers in~\method~ has the same expressive power as optimizing all the layers for the inner (not outer) loss. However, it does not guarantee that gradient descent will find the appropriate optima. The study of such guarantee is left for future work.~\looseness=-1
\vspace{-3mm}
\subsection{Conclusion}
We have presented \textit{tailoring}, a simple way of embedding a powerful class of inductive biases into models, by minimizing unsupervised objectives at prediction time. Tailoring leverages the generality of auxiliary losses and improves them in two ways: first, it eliminates the generalization gap on the auxiliary loss by optimizing it on the query point; second, tailoring only minimizes task loss in the outer optimization and the tailoring loss in the inner optimization. This results in the whole network optimizing the only objective we care about, instead of a proxy loss. Finally, we have formalized these intuitions by proving the benefits of meta-tailoring under mild assumptions.

Tailoring is broadly applicable, as one can vary the model, the unsupervised loss, and the task loss. 
We show its applicability in three diverse domains: physics prediction time-series, contrastive learning, and adversarial robustness. We also provide a simple algorithm,~\method, to make meta-tailoring practical with little additional code.
Currently, most unsupervised or self-supervised objectives without taking into account the supervised down-stream task. 
Instead, meta-tailoring provides a generic way to make these objectives especially useful for each particular application. 

\begin{ack}
We would like to thank Kelsey Allen, Marc de la Barrera, Jeremy Cohen, Dylan Doblar, Chelsea Finn, Sebastian Flennerhag, Jiayuan Mao, and Shengtong Zhang for insightful discussions. We would also like to thank Clement Gehring for his help with deploying the experiments and Lauren Milechin for her help with leveraging the MIT supercloud platform~\citep{reuther2018interactive}.

We gratefully acknowledge support from NSF grant 1723381; from AFOSR grant FA9550-17-1-0165; from ONR grant N00014-18-1-2847; from the Honda Research Institute, from MIT-IBM Watson Lab; and from SUTD Temasek Laboratories. We also acknowledge the MIT SuperCloud and Lincoln Laboratory Supercomputing Center for providing HPC resources that have contributed to the reported research results. Any opinions, findings, and conclusions or recommendations expressed in this material are those of the authors and do not necessarily reflect the views of our sponsors.~\looseness=-1
\end{ack}
\medskip

{
\small
\bibliography{main}
\bibliographystyle{icml2021}
}

\section*{Checklist}
\begin{enumerate}

\item For all authors...
\begin{enumerate}
  \item Do the main claims made in the abstract and introduction accurately reflect the paper's contributions and scope?
    \answerYes{}
  \item Did you describe the limitations of your work?
    \answerYes{In section~\ref{sec:limitations}.}
  \item Did you discuss any potential negative societal impacts of your work?
    \answerYes{In section~\ref{subsec:impact}.}
  \item Have you read the ethics review guidelines and ensured that your paper conforms to them?
    \answerYes{}
\end{enumerate}

\item If you are including theoretical results...
\begin{enumerate}
  \item Did you state the full set of assumptions of all theoretical results?
    \answerYes{Important assumptions are in the main text; all assumptions are detailed in the appendix, particularly in appendix~\ref{sec:g_condition_explain}.}
	\item Did you include complete proofs of all theoretical results?
    \answerYes{In appendix~\ref{sec:all_proofs}.}
\end{enumerate}

\item If you ran experiments...
\begin{enumerate}
  \item Did you include the code, data, and instructions needed to reproduce the main experimental results (either in the supplemental material or as a URL)?
    \answerNo{We plan, however, to open-source our codebase once cleaned.}
  \item Did you specify all the training details (e.g., data splits, hyperparameters, how they were chosen)?
    \answerYes{Distributed accross multiple sections in the appendix.}
	\item Did you report error bars (e.g., with respect to the random seed after running experiments multiple times)?
    \answerNo{We report them for the planet experiments and the contrastive experiments. The adversarial experiments are extremely computationally expensive and we only ran them once.}
	\item Did you include the total amount of compute and the type of resources used (e.g., type of GPUs, internal cluster, or cloud provider)?
    \answerYes{In each relevant appendix.}
\end{enumerate}

\item If you are using existing assets (e.g., code, data, models) or curating/releasing new assets...
\begin{enumerate}
  \item If your work uses existing assets, did you cite the creators?
    \answerYes{}
  \item Did you mention the license of the assets?
    \answerYes{In the appendix.}
  \item Did you include any new assets either in the supplemental material or as a URL?
    \answerNA{}
  \item Did you discuss whether and how consent was obtained from people whose data you're using/curating?
    \answerNA{}
  \item Did you discuss whether the data you are using/curating contains personally identifiable information or offensive content?
    \answerNA{}
\end{enumerate}

\item If you used crowdsourcing or conducted research with human subjects...
\begin{enumerate}
  \item Did you include the full text of instructions given to participants and screenshots, if applicable?
    \answerNA{}
  \item Did you describe any potential participant risks, with links to Institutional Review Board (IRB) approvals, if applicable?
    \answerNA{}
  \item Did you include the estimated hourly wage paid to participants and the total amount spent on participant compensation?
    \answerNA{}
\end{enumerate}

\end{enumerate}

\newpage
\appendix

\section{Theorem \ref{thm:expresivity}, Corollary \ref{corollary:expresivity} and interpretation of their conditions} \label{sec:g_condition_explain}

\begin{assumption}\label{a:activation}
\textit{(Common activation)}
The activation function $\sigma(x)$ is real analytic, monotonically  increasing, and the limits exist as: $\lim_{x\rightarrow -\infty}\sigma(x)=\sigma_->-\infty$ and $\lim_{x\rightarrow +\infty}\sigma(x)=\sigma_+\leq+\infty$.
\end{assumption}

\begin{theorem} \label{thm:expresivity}
 For any $x \in \Xcal $ that satisfies $\|g^{(i)}(x)\|_2^2- g^{(i)}(x)^{\top} g^{(j)}(x) > 0$ (for all $ i\neq j$), and for any fully-connected neural network with a single output unit, at least $n_{g}$ neurons per hidden layer, and activation functions that satisfy Assumption \ref{a:activation}, the following holds: 
$\inf_{w,\gamma,\beta} \tailorloss(x,w,\gamma,\beta)=\inf_{\gamma,\beta} \tailorloss(x,\bar w,\gamma,\beta)$ for  any $\bar w \notin \Wcal$ where  Lebesgue measure of $\Wcal \subset \RR^d$ is zero.
\end{theorem}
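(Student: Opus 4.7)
The plan is to reduce the claim to a surjectivity statement for the map sending the affine parameters $(\gamma,\beta)$ to the vector of network outputs at the $n_g$ augmented inputs, with the weight vector $\bar w$ held fixed. Since
$\tailorloss(x,w,\gamma,\beta)=\sum_{i=1}^{n_g}\ell_{\need}(f_{(w,\gamma,\beta)}(g^{(i)}(x)),x)$
depends on $(w,\gamma,\beta)$ only through the output tuple $y(\theta)=(f_{\theta}(g^{(i)}(x)))_{i=1}^{n_g}\in\RR^{n_g}$, if for almost every $\bar w$ the map $(\gamma,\beta)\mapsto y((\bar w,\gamma,\beta))$ is surjective onto $\RR^{n_g}$, then $\inf_{\gamma,\beta}\tailorloss(x,\bar w,\gamma,\beta)=\inf_{u\in\RR^{n_g}}\sum_i \ell_{\need}(u_i,x)$, and this quantity is trivially a lower bound on $\inf_{w,\gamma,\beta}\tailorloss(x,w,\gamma,\beta)$, so the two infima coincide.

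To prove surjectivity I would only use the freedom in the affine parameters at the last hidden layer $L$, which has $m\ge n_g$ neurons. Writing $H(\bar w)\in\RR^{n_g\times m}$ for the matrix of post-activation outputs at layer $L$ on the inputs $g^{(i)}(x)$ under weights $\bar w$, the network output at the $i$-th input is an affine function of $(\gamma^{(L)},\beta^{(L)})$ of the form $\sum_k H(\bar w)_{ik}\,W^{(L+1)}_k\,\gamma^{(L)}_k + c$, where $c$ is itself affine in $\beta^{(L)}$ and constant in $i$. Provided $H(\bar w)$ has rank $n_g$ and every $W^{(L+1)}_k$ is nonzero, the image of this affine map over $(\gamma^{(L)},\beta^{(L)})$ is all of $\RR^{n_g}$. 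The set of $\bar w$ violating either condition has Lebesgue measure zero: the vanishing of some $W^{(L+1)}_k$ is obviously measure zero, and each $n_g\times n_g$ minor of $H(\bar w)$ is a real analytic function of $\bar w$ (a composition of real analytic activations with maps affine in $\bar w$), so by the identity principle for real analytic functions its zero set is either all of $\RR^d$ or has Lebesgue measure zero.

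The main obstacle is therefore to exhibit a single witness $\bar w^\star$ at which some $n_g\times n_g$ minor of $H(\bar w^\star)$ is nonzero, and this is where the input condition $\|g^{(i)}(x)\|_2^2-g^{(i)}(x)^{\top}g^{(j)}(x)>0$ for $i\ne j$ enters. The condition guarantees that the $v_i:=g^{(i)}(x)$ are pairwise distinct in a controlled way (each $v_i$ has strictly positive projection on $v_i-v_j$); in particular one can choose a first-layer direction $u$ and scalar biases $b_k$ so that the scalars $u^{\top}v_i$ are pairwise distinct and can be individually ``selected'' by shifting $b_k$. Combining this with a large scaling and the saturating limits $\sigma_-,\sigma_+$ from Assumption~\ref{a:activation}, I can arrange the first-layer outputs on the $n_g$ inputs to approximate a full row-rank $n_g\times m$ block, and then propagate this through the remaining hidden layers using near-identity weight matrices; monotonicity of $\sigma$ preserves distinctness of activations at each layer, so the resulting $H(\bar w^\star)$ retains rank $n_g$. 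The delicate part is coordinating the scalings across the $L$ hidden layers so that no layer collapses the rank; once any single non-collapsing witness is produced, real analyticity upgrades the statement to hold on a set of full Lebesgue measure in $\bar w$.
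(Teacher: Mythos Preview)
Your overall architecture matches the paper's proof exactly: reduce to surjectivity of $(\gamma^{(L)},\beta^{(L)})\mapsto (f_\theta(g^{(i)}(x)))_i$, observe this map is affine with coefficient matrix built from $H(\bar w)$ and $W^{(L+1)}$, note the rank condition is a real-analytic function of $\bar w$, and then produce a single witness $\bar w^\star$ to conclude the bad set has Lebesgue measure zero.

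The witness constructions differ. You propose a single first-layer direction $u$ with distinct thresholds $b_k$, so that after large scaling the first-layer activation matrix approximates a two-valued triangular pattern in $\{\sigma_-,\sigma_+\}$. The paper instead sets the $k$-th first-layer row to $\alpha\, g^{(k)}(x)^\top$ with bias $c\alpha/2-\alpha\|g^{(k)}(x)\|_2^2$, which uses the hypothesis $\|g^{(k)}\|_2^2-g^{(k)\top}g^{(i)}>0$ directly to force $(G^{(1)})_{kk}\to\sigma_+$ and $(G^{(1)})_{ik}\to\sigma_-$ for $i\ne k$, i.e.\ a near-diagonal limit. Your construction only needs that the $g^{(i)}(x)$ are pairwise distinct, so it actually consumes less of the hypothesis; both yield a full-rank limit.

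There is one genuine gap in your propagation step. The claim ``monotonicity of $\sigma$ preserves distinctness of activations at each layer, so the resulting $H(\bar w^\star)$ retains rank $n_g$'' is not a valid inference: distinct rows do not imply full row rank. What you need is that the specific two-valued (triangular) structure survives each layer. With exact identity weights and zero bias this is delicate because iterated application of $\sigma$ can contract the gap between the two values, and the first-layer entries are only approximately $\sigma_\pm$. The paper handles this by taking $W^{(l)}_{1:n_g,1:n_g}=\alpha^{(l)} I$ and $b^{(l)}_k=c'\alpha^{(l)}/2-\alpha^{(l)}\sigma_+$ for $l\ge 2$, so that entries near $\sigma_+$ get large positive pre-activation and entries near $\sigma_-$ get large negative pre-activation; choosing $\alpha^{(1)},\dots,\alpha^{(H)}$ large in sequence re-saturates the pattern at every layer, and the limit matrix (after subtracting $\sigma_-$) is strictly diagonally dominant. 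You should replace the ``distinctness'' sentence with this amplification argument (or an equivalent one showing your triangular $\{\sigma_-,\sigma_+\}$-pattern is preserved and has nonzero determinant $(\sigma_+-\sigma_-)^{n_g-1}\sigma_+$, handling the $\sigma_+=0$ edge case via the extra $\beta$-column as the paper does).
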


Assumption \ref{a:activation} is satisfied by using common activation functions such as sigmoid and hyperbolic tangent, as well as {\em softplus}, which is defined as $\sigma_\alpha(x)=\ln(1+\exp(\alpha x))/\alpha$ and satisfies Assumption \ref{a:activation} with any hyperparameter $\alpha \in \RR_{>0}$. The softplus activation function can approximate the ReLU function to any desired accuracy: i.e.,
$
\sigma_{\alpha}(x) \rightarrow \mathrm{relu}(x) \text{ as } \alpha\rightarrow \infty,
$
where $\mathrm{relu}$ represents the ReLU function. 


In Theorem \ref{thm:expresivity} and Corollary \ref{corollary:expresivity}, the condition $\|g^{(i)}(x)\|_2^2- g^{(i)}(x)^{\top} g^{(j)}(x) > 0$ (for all $ i\neq j$) can be easily satisfied, for example,  by choosing $g^{(1)},\dots,g^{(n_g)}$ to produce  normalized  and distinguishable argumented inputs for each prediction point  $x$ at prediction time. To see this, with normalization  $\|g^{(i)}(x)\|^{2}_2=\|g^{(j)}(x)\|_2^2$, the condition is satisfied if $\|g^{(i)}(x)-g^{(j)}(x)\|_2^2>0$ for $i\neq j$   since  $\frac{1}{2}\|g^{(i)}(x)-g^{(j)}(x)\|_2^2=\|g^{(i)}(x)\|_2^{2}- g^{(i)}(x)^{\top} g^{(j)}(x)$. 

In general, the normalization is not necessary for  the condition to hold; e.g., orthogonality on   $g^{(i)}(x)$ and $g^{(j)}(x)$ along with $g^{(i)}(x) \neq 0$ satisfies it without the  normalization.

\section{Understanding the expected meta-tailoring contrastive loss} \label{sec:contrastive_def}
To analyze meta-tailoring for contrastive learning, we focus on the binary classification loss of the form $\suploss(f_\theta(x),y)=\ell_{\cont}(f_\theta(x)_{y}-f_\theta(x)_{y'=\neg y})$ where $\ell_{\cont}$ is convex and  $\ell_{\cont}(0)=1$. With this, the objective function $\theta\mapsto \suploss(f_\theta(x),y)$ is still non-convex in general. For example, the standard hinge loss $\ell_{\cont}(z)=\max\{0,1-z\}$ and the logistic loss $\ell_{\cont}(z)=s\log_{2}(1+\exp(z))$ satisfy this condition.

We first define the meta-tailoring contrastive loss  $\Lcal_{\cont}(x,\theta)$ in detail. In meta-tailoring contrastive learning, we  choose the probability measure of positive example $x^+ \sim \mu_{x^{+}}(x)$ and  the probability measure of negative example $x^-,y^- \sim \mu_{x^{-},y^{-}}(x)$,
both of which are tailored for  each input point $x$ at prediction time. 
These choices induce the marginal distributions   for the negative examples  $y^{-}\sim\mu_{y^{-}}(x)$ and $x^- \sim \mu_{x^{-}}(x)$, as well as the unknown probability of $y^{-}= y$ defined by $\rho_{y}(\mu_{y^{-}}(x))=\EE_{y^- \sim\mu_{y^{-}}(x)}(\mathbbm{1}\{y^{-}= y\})$. 
Define the lower and upper bound on the probability of $y^{-}= y$  as $\underline{\rho}(x)  \le \rho_{y}(\mu_{y^{-}}(x))\le  \bar \rho(x)\in [0,1) $.

Then, the first pre-meta-tailoring contrastive loss can be defined by$$
\Lcal_{\cont}^{x^{+},x^-}(x,\theta)=\EE_{\substack{x^+ \sim \mu_{x^{+}}(x), \\ x^-\sim\mu_{x^{-}}(x)}}[\ell_{\cont}(h_{\theta^{}}(x)^{\top}(h_{\theta^{}}(x^{+})-h_{\theta^{}}(x^{-})))],
$$
where   $h_{\theta^{}}(x) \in \RR^{m_H+1}$  represents the output of the last hidden layer, including  a constant neuron corresponding the bias term of the last output layer  (if there is no bias term, $h_{\theta^{}}(x) \in \RR^{m_H}$).
For every  $z \in \RR^{2 \times (m_{H}+1)}$,
define
$
\psi_{x,y,y^- }(z)=\ell_{\cont}((z_{y}-z_{y^{-}})h_{\theta^{}}(x)),
$
where $z_{y}\in \RR^{1 \times m_{H}}$ is the $y$-th row vector of $z$. We define the second pre-meta-tailoring contrastive loss  by 
$$
\Lcal_{\cont}^{x^{+},x^{-},y^- }(x,\theta)=\max_{y}\EE_{y^- \sim\mu_{y^{-}}(x)}[\psi_{x, y,y^{-}}(\theta^{(H+1)})-\psi_{x,1,2}([u_h^+,u_h^-]^{\top})],
$$
where $u_h^+=\EE_{x^+ \sim \mu_{x^{+}}(x)}[h_{\theta^{}}(x^+ )]$ and $u_h^-=\EE_{x^-\sim\mu_{x^{-}}(x)}[h_{\theta^{}}(x^-)]$. Here, we decompose $\theta$  as $\theta=(\theta^{(1:H)},\theta^{(H+1)})$, where  $\theta^{(H+1)}=[W^{(H+1)},b^{(H+1)}] \in \RR^{m_y \times( m_H+1)}$ represents the parameters at the last output layer, and   $\theta^{(1:H)}$ represents all others. 

Then, the meta-tailoring contrastive loss is defined by 
$$
\Lcal_{\cont}(x,\theta) =\frac{1}{1- \bar \rho(x)  }\left(\Lcal_{\cont}^{x^{+},x^-}(x,\theta)+\Lcal_{\cont}^{x^{+},x^{-},y^- }(x,\theta)-\underline{\rho}(x)\right).
$$

Theorem \ref{thm:cont_3} states that for any $\theta^{(1:H)}$,  the convex optimization of $\Lcal_{\cont}^{x^{+},x^-}(x,\theta)+\Lcal_{\cont}^{x^{+},x^{-},y^- }(x,\theta)$ over $\theta^{(H+1)}$ can achieve the value of $\Lcal_{\cont}^{x^{+},x^-}(x,\theta)$ without the value of $\Lcal_{\cont}^{x^{+},x^{-},y^- }(x,\theta)$, allowing us to focus  on the first term $\Lcal_{\cont}^{x^{+},x^-}(x,\theta)$, for some choice of  $\mu_{x^{-},y^{-}}(x)$ and $\mu_{x^{+}}(x)$. 

\begin{theorem} \label{thm:cont_3}
 For any $\theta^{(1:H)},\mu_{x^{-},y^{-}}(x)$ and $\mu_{x^{+}}(x)$, the function $\theta^{(H+1)} \mapsto\Lcal_{\cont}^{x^{+},x^-}(x,\theta)+\Lcal_{\cont}^{x^{+},x^{-},y^- }(x,\theta)$ is convex.
Moreover, there exists  $\mu_{x^{-},y^{-}}(x)$ and $\mu_{x^{+}}(x)$ such that, for any $\theta^{(1:H)}$ and any $\bar \theta^{(H+1)}$,
$$
\inf_{\theta^{(H+1)}\in \RR^{m_y \times( m_H+1)}} \Lcal_{\cont}^{x^{+},x^-}(x,\theta)+\Lcal_{\cont}^{x^{+},x^{-},y^- }(x,\theta)\le \Lcal_{\cont}^{x^{+},x^-}(x,\theta^{(1:H)}, \bar \theta^{(H+1)}) .
$$ 
\end{theorem}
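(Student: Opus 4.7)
The plan is to split the theorem into the convexity claim (first sentence) and the existence-and-bound claim (the \emph{moreover} statement), and handle them independently.

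For convexity, I would first observe that $h_\theta(x)$ is the output of the last hidden layer (augmented by a constant $1$ for the bias term), so it depends only on $\theta^{(1:H)}$ and not on $\theta^{(H+1)}$. Consequently $\Lcal_{\cont}^{x^{+},x^-}(x,\theta)$ is constant in $\theta^{(H+1)}$, and is trivially convex. For $\Lcal_{\cont}^{x^{+},x^{-},y^-}$, the subtracted term $\psi_{x,1,2}([u_h^+,u_h^-]^{\top})$ is likewise independent of $\theta^{(H+1)}$. The remaining piece $\psi_{x,y,y^-}(\theta^{(H+1)})=\ell_{\cont}((\theta^{(H+1)}_{y}-\theta^{(H+1)}_{y^-})\, h_\theta(x))$ is a composition of the convex function $\ell_{\cont}$ with an affine map in $\theta^{(H+1)}$, hence convex. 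Since convexity is preserved under expectation in $y^-$ and under pointwise maximum in $y$, the second summand is convex, and the full sum is convex.

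For the existence statement, since $\Lcal_{\cont}^{x^{+},x^-}$ does not depend on $\theta^{(H+1)}$ (and the RHS is evaluated at an arbitrary $\bar\theta^{(H+1)}$ that therefore drops out), the target inequality reduces to $\inf_{\theta^{(H+1)}}\Lcal_{\cont}^{x^{+},x^{-},y^-}(x,\theta)\le 0$. I would construct the measures so that $\mu_{x^+}(x)$ and the marginal $\mu_{x^-}(x)$ of $\mu_{x^-,y^-}(x)$ agree (for instance, take both to be $\delta_x$, or any common distribution), and take $\mu_{y^-}$ to be uniform on $\{1,2\}$ so that $\bar\rho(x)=1/2<1$ and the normalization in the definition of $\Lcal_{\cont}$ is well-posed. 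With these choices one has $u_h^+ = u_h^-$, whence $\psi_{x,1,2}([u_h^+,u_h^-]^{\top})=\ell_{\cont}(0)=1$. Choosing $\theta^{(H+1)}$ with all rows equal (the simplest witness is $\theta^{(H+1)}=0$) makes $\theta^{(H+1)}_y-\theta^{(H+1)}_{y^-}=0$ for every pair $(y,y^-)$, so $\psi_{x,y,y^-}(\theta^{(H+1)})=\ell_{\cont}(0)=1$ as well; the inner expectation is identically $0$ for each $y$, the maximum over $y$ is $0$, and $\Lcal_{\cont}^{x^{+},x^{-},y^-}(x,\theta)=0$, giving the desired $\le 0$ on the infimum.

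The routine content is the composition/operator-preservation bookkeeping for convexity. The only non-mechanical step is choosing the measures in the second half; I expect the main conceptual pitfall to be noticing that $\bar\theta^{(H+1)}$ on the RHS is a red herring (since $\Lcal_{\cont}^{x^{+},x^-}$ is independent of it), so the task collapses to zeroing out $\Lcal_{\cont}^{x^{+},x^{-},y^-}$ with a single $\theta^{(H+1)}$. Aligning $\mu_{x^+}$ with the $x^-$-marginal of $\mu_{x^-,y^-}$ is exactly what makes the reference term $\psi_{x,1,2}([u_h^+,u_h^-]^{\top})$ collapse to $\ell_{\cont}(0)$, matching the value produced by the trivial choice $\theta^{(H+1)}=0$.
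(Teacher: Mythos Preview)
Your convexity argument is essentially identical to the paper's: affine composition with the convex $\ell_{\cont}$, preserved under expectation and pointwise maximum, with the first summand constant in $\theta^{(H+1)}$.

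For the existence claim you take a different, more degenerate construction than the paper. The paper chooses the \emph{class-conditional} distributions $\mu_{x^{+}}(x)=\mu_{x\mid y}$ and (conditionally) $\mu_{x^{-}}=\mu_{x\mid y^{-}}$, sets $U=[u_1,u_2]^{\top}$ with $u_c=\EE_{x\sim\mu_{x\mid c}}[h_{\theta^{(1:H)}}(x)]$, observes that then $\psi_{x,1,2}([u_h^{+},u_h^{-}]^{\top})=\psi_{x,y,y^{-}}(U)$, and takes the witness $\theta^{(H+1)}=U$ so that the second loss vanishes termwise. You instead force $\mu_{x^{+}}$ and the $x^{-}$-marginal to coincide (e.g.\ both $\delta_x$), which collapses the reference term to $\ell_{\cont}(0)=1$, and then the trivial witness $\theta^{(H+1)}=0$ matches it. Both routes correctly give $\Lcal_{\cont}^{x^{+},x^{-},y^{-}}=0$ at the witness and hence the stated inequality. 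Your argument is shorter and entirely valid for the theorem as written; the paper's construction is the ``intended'' contrastive setup (positives from the same class, negatives from a sampled class) with a semantically meaningful witness (the per-class mean-feature classifier). The practical difference is that under your degenerate choice $\Lcal_{\cont}^{x^{+},x^{-}}(x,\theta^{(1:H)})\equiv\ell_{\cont}(0)=1$ carries no contrastive signal, so while the inequality holds, the surrounding narrative (``allowing us to focus on the first term'') is vacuous; the paper's measures keep that first term informative.
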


\section{Proofs} \label{sec:all_proofs}
In order to have concise proofs, we introduce additional notations while keeping track of dependent variables more explicitly. Since $h_\theta$ only depends on $\theta^{(1:H)}$, let us write $h_{\theta^{(1:H)}}=h_\theta$. Similarly, $\Lcal_{\cont}(x,\theta^{(1:H)} )=\Lcal_{\cont}(x,\theta)$. Let $\theta(x)=\theta_x$ and $\theta(x,S)=\theta_{x,S}$. Define $\Lcal = \suploss$.

\subsection{Proof of Theorem \ref{thm:expresivity}}

\begin{proof}[Proof of Theorem \ref{thm:expresivity}]
The output of fully-connected neural networks  for an input $x$ with a parameter vector $\theta=(w,\gamma,\beta)$ can be represented by  $f_{\theta}(x)=W^{(H+1)}h^{(H)}(x)+b^{(H+1)}$ where $W^{(H+1)}\in \RR^{1 \times m_{H}}$ and $b^{(H+1)}\in \RR$ are the weight matrix and the bias term respectively  at the last layer, and $h^{(H)}(x) \in \RR^{m_{H}}$ represents the output of the last hidden layer. 
Here, $m_l$ represents the number of neurons at the $l$-th layer, and  $h^{(l)}(x)=\gamma^{(l)}(\sigma(W^{(l)}h^{(l-1)}(x)+b^{(l)}))-\beta^{(l)} \in \RR^{m_{l}}$ for $l=1,\dots,H$, with trainable parameters $\gamma^{(l)},\beta^{(l)} \in \RR^{m_l}$, where  $h^{(0)}(x)=x$. Let $z ^{(l)}(x)=\sigma(W^{(l)}h^{(l-1)}(x)+b^{(l)})$.

 Then, by rearranging the definition of the output of the neural networks,  
\begin{align*}
f_{\theta}(x)&=W^{(H+1)}h^{(H)}(x)+b^{(H+1)}
\\ & = \left( \sum_{k=1}^{m_H} W_{k}^{(H+1)}\gamma_{k}^{(H)}   z^{(H)}(x)_{k}  +W_{k}^{(H+1)}\beta^{(H)}_{k} \right)+b^{(H+1)} 
\\ & =[W^{(H+1)} \circ  z^{(H)}(x)^{\top}, W^{(H+1)} ]\begin{bmatrix}\gamma^{(H)} \\
\beta^{(H)} \\
\end{bmatrix} +b^{(H+1)}.
\end{align*}
Thus, we can write
\begin{align} \label{proof:eq2}
\begin{bmatrix}
f_{\theta}(g^{(1)}(x)) \\
\vdots \\
f_{\theta}(g^{(n_{g})}(x)) \\
\end{bmatrix} =M_{w} \begin{bmatrix}\gamma^{(H)} \\
\beta^{(H)} \\
\end{bmatrix} + b^{(H+1)} \mathbf{1}_{n_{g}} \in \RR^{n_{g}},
\end{align}
where
$$
M_{w} =\begin{bmatrix}W^{(H+1)} \circ  z^{(H)}(g^{(1)}(x))^{\top}, W^{(H+1)}  \\
\vdots \\
W^{(H+1)} \circ z^{(H)}(g^{(n_{g})}(x))^{\top}, W^{(H+1)} \\
\end{bmatrix}\in \RR^{n_{g} \times 2m_H}, 
$$
and $\mathbf{1}_{n_{g}}=[1,1,\dots,1]^{\top} \in \RR^{n_{g}}$.

Using the above equality, we show an exitance of a $(\gamma,\beta)$ such that $\tailorloss(x,\bar w,\gamma,\beta)=\inf_{w,\gamma,\beta} \tailorloss(x,\theta)$ for any $x \in \Xcal \subseteq \RR^{m_x}$ and  any $\bar w \notin \Wcal$ where  Lebesgue measure of $\Wcal \subset \RR^d$ is zero.
To do so, we first fix $\gamma^{(l)}_k=1$ and $\beta^{(l)}_k = 0$ for $l=1,\dots,H-1$, with which $h^{(l)}(x)=z ^{(l)}(x)$  for $l=1,\dots,H-1$.

Define $\varphi(w)=\det(M_wM_w^{\top})$, which is   analytic  since $\sigma$ is analytic.
Furthermore, we have that $\{w\in \RR^{d}: \text{$M_{w}$ has rank less than ${n_{g}}$} \}
=\{w\in \RR^{d}:\varphi(w)=0\},
$
since the rank of $M_{w}$ and the rank of the Gram matrix are equal.
Since $\varphi$ is analytic, if $\varphi$ is not identically zero ($\varphi\neq 0$),  the Lebesgue measure of its zero set 
$
\{w\in \RR^{d}:\varphi(w)=0\}
$
is zero \citep{mityagin2015zero}. 
Therefore, if   $\varphi(w)\neq 0$ for some $w \in \RR^d$, the  Lebesgue measure of the set $\{w\in \RR^{d}: \text{$M_{w}$ has rank less than ${n_{g}}$} \}$ is zero. 

Accordingly, we now constructs a $w \in \RR^d$ such that $\varphi(w)\neq 0$. Set $W^{(H+1)}=\mathbf{1}_{m_H}^{\top}$. Then,
$$
M_{w} = [\bar M_w, \mathbf{1}_{n_{g},m_H}] \in \RR^{n_{g} \times m_H}. 
$$ 
where
$$
\bar M_w =\begin{bmatrix}z^{(H)}(g^{(1)}(x))^{\top} \\
\vdots \\
z^{(H)}(g^{(n_{g})}(x)(x))^{\top} \\
\end{bmatrix} \in \RR^{n_{g} \times m_H} 
$$
and $\mathbf{1}_{n_{g},m_H} \in \RR^{n_{g} \times m_H}$ with $(\mathbf{1}_{n_{g},m_H})_{ij}=1$ for all $i,j$. For $l=1,\dots,H$,
 define 
$$
G^{(l)} = \begin{bmatrix}z ^{(l)}(g^{(1)}(x))^{\top} \\
\vdots \\
 z ^{(l)}(g^{({n_{g}} )}(x))^{\top} \\
\end{bmatrix} \in \RR^{{n_{g}} \times m_{l}}.
$$
Then, for $l=1,\dots,H$,
$$
G^{(l)} = \sigma(G^{(l-1)} (W^{(l)})^\top + \mathbf{1}_{n_{g}}(b^{(l)})^{\top}),
$$
where $\sigma$ is applied element-wise (by  overloading  of the notation $\sigma)$, and 
$$
(\bar M_w)_{ik} =(G^{(H)} )_{ik} .
$$
From the assumption $g(x)$, there exists $c>0$ such that $\|g^{(i)}(x)\|_2^2-\langle g^{(i)}(x), g^{(j)}(x)\rangle > c$ for all $i\neq j$. From Assumption \ref{a:activation}, there exists $c'$ such that $\sigma_{+} - \sigma_- >c'$. Using these constants,
set $W^{(1)}_i=\alpha^{(1)} g^{(i)}(x)^{\top}$ and $b^{(1)}_i=c\alpha^{(1)}/2 - \alpha^{(1)} \|g^{(i)}(x)\|_2^2$ for $i=1,\dots,n_{g}$, where $W^{(1)}_i$ represents the $i$-th row  of $W^{(1)}$.
Moreover, set $W^{(l)}_{1:n_{g},1:n_{g}}=\alpha^{(l)} I_{n_{g}}$ and $b^{(l)}_k=c'\alpha^{(l)}/2 -\alpha^{(l)} \sigma_+$ for all $k$ and $l=2,\dots,H$, where $W^{(l)}_{1:n_{g},1:n_{g}}$ is the fist $n_{g} \times n_{g}$ block matrix of $W^{(1)}$ and $I_{n_{g}}$ is the  $n_{g} \times n_{g}$ identity matrix.
Set all other weights and bias to be zero. Then,
for any $i\in\{1,\dots,n_{g}\}$,
$$
(G^{(1)} )_{ii} = \sigma(c\alpha^{(1)}/2), 
$$
and for any $k\in\{1,\dots,n_{g}\}$ with  $k \neq i$,  
$$
(G^{(1)} )_{ik} =\sigma(\alpha^{(1)} (\langle g^{(i)}(x), g^{(k)}(x)\rangle- \|g^{(k)}(x)\|_2^{2}+c/2) ) \le\sigma(-c\alpha^{(1)}/2).  
$$
Since $\sigma(c\alpha^{(1)}/2)\rightarrow\sigma_{+}$ and $\sigma(-c\alpha^{(1)}/2) \rightarrow \sigma_-$  
as  $\alpha^{(1)}\rightarrow \infty$, with  $\alpha^{(1)}$ sufficiently large, we have that $\sigma(c\alpha^{(1)}/2)-\sigma_++c'/2\ge c^{(2)}_1$ and $\sigma(-c\alpha^{(1)}/2)-\sigma_++c'/2 \le -c^{(2)}_2 $ for some $c^{(2)}_1,c^{(2)}_2>0$. Note that $c^{(2)}_1$ and $c^{(2)}_2$ depends only on $\alpha^{(1)}$ and does not depend on any of $\alpha^{(2)},\dots,\alpha^{(H)}$. Therefore, with  $\alpha^{(1)}$ sufficiently large,
$$
(G^{(2)})_{ii} =\sigma(\alpha^{(2)} (\sigma(c\alpha^{(1)}/2)-\sigma_++c'/2) ) \ge\sigma(\alpha^{(2)} c^{(2)}_1) ,
$$
and  
$$
(G^{(2)})_{ik} \le \sigma(\alpha^{(2)}(\sigma(-c\alpha^{(1)}/2)-\sigma_++c'/2)  ) \le\sigma(-\alpha^{(2)} c^{(2)}_2)   . 
$$
Repeating this process with Assumption \ref{a:activation}, we have that with    $\alpha^{(1)},\dots,\alpha^{(H-1)}$ sufficiently large,
$$(G^{(H)})_{ii} \ge\sigma(\alpha^{(H)} c^{(H)}_1),$$ 
and 
$$(G^{(H)})_{ik} \le\sigma(-\alpha^{(H)} c^{(H)}_2).
$$ 
Here,  $(G^{(H)})_{ii}\rightarrow\sigma_{+}$ and $(G^{(H)})_{ik}\rightarrow\sigma_{-}$ as $\alpha^{(H)} \rightarrow \infty$.
Therefore, with    $\alpha^{(1)},\dots,\alpha^{(H)}$ sufficiently large,
for any $i \in \{1,\dots,n_{g}\}$,
\begin{align} \label{proof:eq1}
\left|(\bar M_w)_{ii} - \sigma_{-}  \right| > \sum_{k\neq i} \left|(\bar M_w)_{ik}  - \sigma_{-}  \right|.
\end{align}
The inequality \eqref{proof:eq1} means that the matrix $\bar M'_w=[(\bar M_w)_{ij} - \sigma_{-}]_{1\le i,j\le n_{g}} \in \RR^{n_{g} \times n_g}$
 is strictly diagonally dominant and hence is nonsingular with rank $n_{g}$. This implies that the matrix $[\bar M'_w, \mathbf{1}_{n_g}]\in \RR^{n_{g} \times (n_g+1)}$ has rank $n_g$. This then implies that the matrix $\tilde M_w=[[(\bar M_w)_{ij}]_{1\le i,j\le n_{g}}, \mathbf{1}_{n_g}]\in \RR^{n_{g} \times (n_g+1)}$ has rank $n_g$, since the elementary matrix operations preserve the matrix rank. Since the set of all columns of $M_w$ contains all columns of $\tilde M_w$, this implies that $M_w$ has rank $n_{g}$ and $\varphi(w)\neq 0$ for this constructed particular $w$.  

Therefore, the Lebesgue measure of the  set 
$
\Wcal = \{w\in \RR^{d}:\varphi(w)=0\}
$
is zero.
If $w \notin \Wcal$, $\{(f_{\bar w,\bar \gamma,\bar \beta}(g^{(1)}(x)),\dots,f_{\bar w,\bar \gamma,\bar \beta}(g^{(n_{g})}(x))\in \RR^{n_g}:\bar \gamma^{(l)},\bar \beta ^{(l)}\in \RR^{ml}\}=\RR^{n_g}$, since $M_w$ has rank $n_{g}$ in \eqref{proof:eq2} for some $\bar \gamma^{(l)},\bar \beta ^{(l)}$ for $l=1,\dots,H-1$ as shown above. Thus, for any $\bar w \notin \Wcal$ and for any $(w,\gamma,\beta)$,  there exists $(\bar \gamma,\bar \beta)$ such that 
$$
(f_{w,\gamma,\beta}(g^{(1)}(x)),\dots,f_{w,\gamma,\beta}(g^{(n_{g})}(x)) =(f_{\bar w,\bar \gamma,\bar \beta}(g^{(1)}(x)),\dots,f_{\bar w,\bar \gamma,\bar \beta}(g^{(n_{g})}(x)) 
$$
which implies the desired statement.  

\end{proof}

\subsection{Proof of Corollary \ref{corollary:expresivity}}
\begin{proof}[Proof of Corollary \ref{corollary:expresivity}]
Since non-degenerate Gaussian measure with any mean and variance is absolutely continuous with respect to Lebesgue measure, Theorem \ref{thm:expresivity} implies the statement of this corollary.

\end{proof}

\subsection{Proof of Theorem \ref{thm:cont_2}}

The following lemma provides an upper bound on the expected loss via expected meta-tailoring contrastive loss.

\begin{lemma} \label{lemma:contrastive}
For every $\theta$,
\begin{align*}  
\EE_{x,y}[\Lcal(f_\theta(x),y)]\le 
\EE_{x}\left[\frac{1}{1- \bar \rho(x)  }\left(\Lcal_{\cont}^{x^{+},x^-}(x,\theta^{(1:H)})+\Lcal_{\cont}^{x^{+},x^{-},y^- }(x,\theta^{})-\underline{\rho}(x)\right) \right]   
\end{align*}
\end{lemma}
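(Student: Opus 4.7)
The plan is to follow the standard collision-decomposition argument from the analysis of contrastive learning, adapted to the meta-tailoring definitions given above. First, I identify the binary supervised loss with a specific instance of the $\psi$ function: since $\Lcal(f_\theta(x),y)=\ell_{\cont}(f_\theta(x)_y-f_\theta(x)_{\neg y})$ and $f_\theta(x)=\theta^{(H+1)}[h_\theta(x)^\top,1]^\top$, I get the pointwise identity $\Lcal(f_\theta(x),y)=\psi_{x,y,\neg y}(\theta^{(H+1)})$. The goal then becomes to upper-bound $\EE_{x,y}[\psi_{x,y,\neg y}(\theta^{(H+1)})]$ in terms of the two pre-meta-tailoring contrastive losses.

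Second, I split $\EE_{y^-\sim\mu_{y^-}(x)}[\psi_{x,y,y^-}(\theta^{(H+1)})]$ according to whether $y^-=y$ (collision) or $y^-\neq y$. Under a collision we have $\psi_{x,y,y}(\theta^{(H+1)})=\ell_{\cont}(0)=1$, and in the binary setting $y^-\neq y$ forces $y^-=\neg y$, so
\[
\EE_{y^-}[\psi_{x,y,y^-}(\theta^{(H+1)})]=\rho_y(\mu_{y^-}(x))+\bigl(1-\rho_y(\mu_{y^-}(x))\bigr)\psi_{x,y,\neg y}(\theta^{(H+1)}).
\]
Solving for $\psi_{x,y,\neg y}(\theta^{(H+1)})$ and using $\underline{\rho}(x)\le\rho_y(\mu_{y^-}(x))\le\bar{\rho}(x)$ (together with $\ell_{\cont}\ge 0$, which keeps the relevant numerator nonnegative) yields the pointwise bound
\[
\psi_{x,y,\neg y}(\theta^{(H+1)})\le\frac{\EE_{y^-}[\psi_{x,y,y^-}(\theta^{(H+1)})]-\underline{\rho}(x)}{1-\bar{\rho}(x)}.
\]

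Third, I relax the numerator by inserting and subtracting $\psi_{x,1,2}([u_h^+,u_h^-]^\top)$ and taking the maximum over $y$, which is exactly the construction appearing in the definition of $\Lcal_{\cont}^{x^{+},x^{-},y^-}$. This turns $\EE_{y^-}[\psi_{x,y,y^-}(\theta^{(H+1)})]$ into $\Lcal_{\cont}^{x^{+},x^{-},y^-}(x,\theta)+\psi_{x,1,2}([u_h^+,u_h^-]^\top)$. Then convexity of $\ell_{\cont}$ and Jensen's inequality applied to the independent draws of $x^+,x^-$ give $\psi_{x,1,2}([u_h^+,u_h^-]^\top)=\ell_{\cont}\!\bigl(h_\theta(x)^\top(u_h^+-u_h^-)\bigr)\le\Lcal_{\cont}^{x^{+},x^-}(x,\theta)$, the crucial step linking the first pre-meta-tailoring loss to the mean-classifier term.

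Combining these three steps produces, for every $(x,y)$, a bound whose right-hand side no longer depends on $y$; taking expectation over $(x,y)$ and noting that the factor $1/(1-\bar{\rho}(x))$ only depends on $x$ gives exactly the stated inequality. The main subtlety will be the bookkeeping around taking the maximum over $y$: the bound for $\psi_{x,y,\neg y}$ must be relaxed to the $y$-maximized version so that the two pre-meta-tailoring losses, as they are actually defined, appear on the right-hand side; this is where the $-\psi_{x,1,2}([u_h^+,u_h^-]^\top)$ correction inside $\Lcal_{\cont}^{x^{+},x^{-},y^-}$ plays its role. I expect this to be routine once the binary collision decomposition is set up carefully; the only conceptual care is ensuring that the Jensen step goes in the direction that upper-bounds the mean-classifier loss by the empirical contrastive loss.
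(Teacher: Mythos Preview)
Your proposal is correct and follows essentially the same route as the paper's own proof: the binary collision decomposition $\EE_{y^-}[\psi_{x,y,y^-}]=\rho+(1-\rho)\psi_{x,y,\neg y}$, the relaxation via $\max_y$ to produce $\Lcal_{\cont}^{x^+,x^-,y^-}$, Jensen on $\ell_{\cont}$ to bound $\psi_{x,1,2}([u_h^+,u_h^-]^\top)\le\Lcal_{\cont}^{x^+,x^-}$, and finally the replacement $\rho\mapsto\underline{\rho},\bar\rho$. The only cosmetic difference is the order: the paper carries the exact $\rho$ through the $\psi$/Jensen/$\max_y$ steps first and only invokes $\underline{\rho}\le\rho\le\bar\rho$ at the very end, whereas you solve for $\psi_{x,y,\neg y}$ and apply the $\rho$ bounds immediately (correctly using $\ell_{\cont}\ge0$ to keep the numerator nonnegative); both orderings yield the same bound.
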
 
\begin{proof}[Proof of Lemma \ref{lemma:contrastive}]
Using the notation $\rho=\rho_{y}(\mu_{y^{-}}(x))$,
\begin{align*}
&\EE_{x,y}[\Lcal(f_\theta(x),y)] 
\\ & =\EE_{x,y}\left[\frac{1}{1-\rho}\left((1-\rho)\Lcal(f_\theta(x),y) \pm \rho \right) \right]
\\ & =\EE_{x,y}\left[\frac{1}{1-\rho}\left((1-\rho) \ell_{\cont}(f_\theta(x)_{y}-f_\theta(x)_{y^{-}\neq y}) +\rho \ell_{\cont}(f_\theta(x)_{y}-f_\theta(x)_{y^{-}= y}) - \rho\right) \right] 
\\ & =\EE_{x,y}\left[\frac{1}{1-\rho}\left(\EE_{y^- \sim\mu_{y^{-}}(x)}[\ell_{\cont}(f_\theta(x)_{y}-f_\theta(x)_{y^{-}}) ]- \rho\right) \right] 
\\ & =\EE_{x,y}\left[\frac{1}{1-\rho}\left(\EE_{y^- \sim\mu_{y^{-}}(x)}[\psi_{x, y,y^{-}}(\theta^{(H+1)})]- \rho\right) \right] 
\\ & \le \EE_{x,y}\left[\frac{1}{1-\rho}\left(\psi_{x,1,2}([u_h^+,u_h^-]^{\top})+\Lcal_{\cont}^{x^{+},x^{-},y^- }(x,\theta^{})- \rho\right) \right]
\\ & \le \EE_{x,y}\left[\frac{1}{1-\rho}\left(\Lcal_{\cont}^{x^{+},x^-}(x,\theta^{(1:H)})+\Lcal_{\cont}^{x^{+},x^{-},y^- }(x,\theta^{})- \rho\right) \right]
\end{align*}
where the third line follows from the definition of $\Lcal(f_\theta(x),y)$ and    $\ell_{\cont}(f_\theta(x)_{y}-f_\theta(x)_{y'= y})=\ell_{\cont}(0)=1$, the forth line follows from the definition of $\rho$ and the expectation $\EE_{y^- \sim\mu_{y^{-}}(x)}$, the fifth line follows from $f_\theta(x)_{y}=\theta^{(H+1)}_{y}h_{\theta^{(1:H)}}(x)$ and  $f_\theta(x)_{y^{-}}=\theta^{(H+1)}_{y^{-}}h_{\theta^{(1:H)}}(x)$, the sixth line follows from the definition of $\Lcal_{\cont}^{x^{+},x^{-},y^- }$. 
The last line follows from the convexity of $\ell_{\cont}$ and Jensen's inequality: i.e.,  
\begin{align*}
&\psi_{x,1,2}([u_h^+,u_h^-]^{\top})
\\&=\ell_{\cont}(\EE_{x^+ \sim \mu_{x^{+}}(x)}\EE_{x^-\sim\mu_{x^{-}}(x)}[(h_{\theta^{(1:H)}}(x^+ )-h_{\theta^{(1:H)}}(x^-))^{\top}h_{\theta^{(1:H)}}(x)])
\\ & \le\EE_{x^+ \sim \mu_{x^{+}}(x)}\EE_{x^-\sim\mu_{x^{-}}(x)}\ell_{\cont}((h_{\theta^{(1:H)}}(x^+ )-h_{\theta^{(1:H)}}(x^-))^{\top}h_{\theta^{(1:H)}}(x)).  
\end{align*}
Therefore,
\begin{align*} 
& \EE_{x,y}[\Lcal(f_\theta(x),y)]
\\ &\le 
\EE_{x,y}\left[\frac{1}{1-\rho_{y}(\mu_{y^{-}}(x))}\left(\Lcal_{\cont}^{x^{+},x^-}(x,\theta^{(1:H)})+\Lcal_{\cont}^{x^{+},x^{-},y^- }(x,\theta^{})-\rho_{y}(\mu_{y^{-}}(x))\right) \right]   
\\ & \le \EE_{x}\left[\frac{1}{1- \bar \rho(x)  }\left(\Lcal_{\cont}^{x^{+},x^-}(x,\theta^{(1:H)})+\Lcal_{\cont}^{x^{+},x^{-},y^- }(x,\theta^{})-\underline{\rho}(x)\right) \right]
\end{align*}
where we used $\underline{\rho}(x)  \le \rho_{y}(\mu_{y^{-}}(x))\le  \bar \rho(x) \in [0, 1)$.
\end{proof}

\begin{lemma} \label{lemma:gen_2}
Let $S \mapsto  f_{\theta(x,S)}(x)$ be an uniformly $\zeta$-stable tailoring algorithm. Then, for any $\delta>0$, with probability at least $1-\delta$ over  an i.i.d. draw of $n$ i.i.d. samples  $S=((x_i, y_i))_{i=1}^n$, the following holds:
\begin{align*}
\EE_{x,y}[\Lcal(f_{\theta(x,S)}(x),y)] \le \frac{1}{n}\sum_{i=1}^{n}\Lcal(f_{\theta(x_i,S)}(x_i),y_i)+\frac{\zeta}{n}+(2\zeta+c) \sqrt{\frac{\ln(1/\delta)}{2n}}. 
\end{align*}
\end{lemma}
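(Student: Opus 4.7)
The plan is to prove Lemma \ref{lemma:gen_2} by a classical uniform-stability argument in the style of Bousquet and Elisseeff, adapted to the meta-tailoring setting where the hypothesis depends on both the training set $S$ and the query point $x$. Define the generalization gap functional
$$F(S) \;=\; \EE_{x,y}[\Lcal(f_{\theta(x,S)}(x),y)] \;-\; \frac{1}{n}\sum_{i=1}^{n}\Lcal(f_{\theta(x_i,S)}(x_i),y_i).$$
I would first bound $\EE_S[F(S)]$, then establish bounded differences for $F$, and finally apply McDiarmid's inequality.

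For the expectation bound, I would use a standard symmetry/renaming trick. Let $z'=(x',y')$ be an independent fresh sample and let $S^{1\to z'}$ denote $S$ with the first sample replaced by $z'$. By exchangeability, the empirical average satisfies $\EE_S[\frac{1}{n}\sum_i \Lcal(f_{\theta(x_i,S)}(x_i),y_i)] = \EE_S[\Lcal(f_{\theta(x_1,S)}(x_1),y_1)]$, while renaming $z_1 \leftrightarrow z'$ gives $\EE_{S,z'}[\Lcal(f_{\theta(x',S)}(x'),y')] = \EE_{S,z'}[\Lcal(f_{\theta(x_1,S^{1\to z'})}(x_1),y_1)]$. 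Subtracting and invoking uniform $\zeta$-stability at the fixed query $x_1$ yields $|\EE_S[F(S)]|\le \zeta/n$.

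Next I would verify bounded differences. Replacing $z_1$ in $S$ by $z'$ perturbs the population term by at most $\zeta/n$ (uniform stability, applied pointwise inside the expectation over $x,y$). For the empirical term, the $n-1$ unchanged summands contribute at most $(n-1)\zeta/n^2$ each by stability applied at the fixed queries $x_i$, $i\ge2$, while the single swapped summand contributes at most $c/n$ using the pointwise bound $\Lcal(f_\theta(x),y)\le c$. Summing, $|F(S)-F(S^{1\to z'})| \le (2\zeta+c)/n$, and the same bound holds for replacing any coordinate by symmetry. McDiarmid's inequality then gives, with probability at least $1-\delta$,
$$F(S) \;\le\; \EE_S[F(S)] + (2\zeta+c)\sqrt{\frac{\ln(1/\delta)}{2n}} \;\le\; \frac{\zeta}{n} + (2\zeta+c)\sqrt{\frac{\ln(1/\delta)}{2n}},$$
which rearranges to the claimed inequality.

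The main subtlety, and the step I expect to be most delicate, is the renaming argument and the pointwise application of uniform stability: unlike ordinary inductive learning where the hypothesis is a single $\theta$, here the parameter $\theta(x,S)$ itself depends on the query $x$, so Definition \ref{def:stability} must be applied inside the expectation $\EE_{x,y}$ for the population term and separately at each fixed $x_i$ for the empirical term. Once one is careful to fix the query and vary only the dataset when invoking stability, both the expectation bound and the bounded-differences estimate go through cleanly, and the rest is a mechanical invocation of McDiarmid.
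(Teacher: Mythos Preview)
Your proposal is correct and follows essentially the same approach as the paper: both define the generalization gap $F(S)=\varphi(S)$, establish the bounded-differences constant $(2\zeta+c)/n$ by splitting into the population term (bounded by $\zeta/n$ via stability) and the empirical term (bounded by $(n-1)\zeta/n^2 + c/n$), apply McDiarmid, and then bound $\EE_S[F(S)]\le\zeta/n$ via the replace-one-sample renaming trick. The only cosmetic differences are that the paper bounds the expectation after stating McDiarmid and averages over all coordinates $i$ rather than fixing $i=1$, and a minor slip in your phrasing (``$(n-1)\zeta/n^2$ each'' should read ``$\zeta/n^2$ each, totaling $(n-1)\zeta/n^2$''), but the substance and final bound are identical.
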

\begin{proof}[Proof of Lemma \ref{lemma:gen_2}]
 Define $\varphi_1(S)=\EE_{x,y}[\Lcal(f_{\theta(x,S)}(x),y)] $ and $\varphi_2(S)=\frac{1}{n}\sum_{i=1}^{n}\Lcal(f_{\theta(x_i,S)}(x_i),y_i) $, and $
 \varphi(S)=\varphi_1(S) -\varphi_2(S) $.  To apply McDiarmid's inequality to $\varphi(S)$, we compute an upper bound on $|\varphi(S)-\varphi(S')|$ where  $S$ and $S'$  be two training datasets differing by exactly one point of an arbitrary index $i_{0}$; i.e.,  $S_i= S'_i$ for all $i\neq i_{0}$ and $S_{i_{0}} \neq S'_{i_{0}}$, where $S'=((x_i', y_i'))_{i=1}^n$. Let $\zetat = \frac{\zeta}{n}$
Then, 
$$
|\varphi(S)-\varphi(S')|\le |\varphi_1(S)-\varphi_1(S')  | + |\varphi_2(S)-\varphi_2(S')|.
$$
For the first term, using the $\zeta$-stability, 
\begin{align*}
|\varphi_1(S)-\varphi_1(S')  |  &\le\EE_{x,y}[|\Lcal(f_{\theta(x,S)}(x),y)-\Lcal(f_{\theta(x,S')}(x),y)|] \\ & \le \zetat. 
\end{align*}
For the second term, using $\zeta$-stability and the upper bound $c$ on per-sample loss, 
\begin{align*}
 |\varphi_2(S)-\varphi_2(S')| & \le \frac{1}{n} \sum_{i\neq i_0} |\Lcal(f_{\theta(x_i,S)}(x_i),y_i) -\Lcal(f_{\theta(x_i,S')}(x_i),y_i) | + \frac{c}{n} 
\\ & \le \frac{(n-1)\zetat}{n} + \frac{c}{n} \le \zetat+ \frac{c}{n}.  
\end{align*}
Therefore, $|\varphi(S)-\varphi(S')|\le 2\zetat + \frac{c}{n}$. By McDiarmid's inequality, for any $\delta>0$, with probability at least $1-\delta$,
$$
\varphi(S) \le  \EE_{S}[\varphi(S)] + (2 \zeta+c) \sqrt{\frac{\ln(1/\delta)}{2n}}.
$$
The reset of the proof bounds the first term $\EE_{S}[\varphi(S)]$. By the linearity of expectation,
\begin{align*}
\EE_{S}[\varphi(S)] =\EE_{S}[\varphi_1(S)]-\EE_{S}[\varphi_1(S)]. 
\end{align*}
For the first term,
$$
\EE_{S}[\varphi_1(S)]=\EE_{S,x,y}[\Lcal(f_{\theta(x,S)}(x),y)].
$$
For the second term,
using the linearity of expectation,
\begin{align*}
\EE_{S}[\varphi_2(S)] &=\EE_{S}\left[\frac{1}{n}\sum_{i=1}^{n}\Lcal(f_{\theta(x_i,S)}(x_i),y_i)\right]
\\ & = \frac{1}{n}\sum_{i=1}^{n}\EE_{S}[\Lcal(f_{\theta(x_i,S)}(x_i),y_i)]
\\ & = \frac{1}{n}\sum_{i=1}^{n}\EE_{S,x,y}[\Lcal(f_{\theta(x,S^i_{x,y})}(x),y)],
\end{align*}
where $S^i$ is a sample of $n$ points such that  $(S^i_{x,y})_j=S_{j}$ for $j \neq i$ and $(S^i_{x,y})_i=(x,y)$. By combining these,
using the linearity of expectation and  $\zeta$-stability,
\begin{align*}
\EE_{S}[\varphi(S)] &=\frac{1}{n}\sum_{i=1}^n \EE_{S,x,y}[\Lcal(f_{\theta(x,S)}(x),y)-\Lcal(f_{\theta(x,S^i_{x,y})}(x),y)]
\\ & \le\frac{1}{n}\sum_{i=1}^n \EE_{S,x,y}[|\Lcal(f_{\theta(x,S)}(x),y)-\Lcal(f_{\theta(x,S^i_{x,y})}(x),y)|]
\\ & \le \frac{1}{n} \sum_{i=1}^n \zetat =\zetat.  
\end{align*}
Therefore, $\EE_{S}[\varphi(S)]\le \zetat$.

\end{proof}

\begin{proof}[Proof of Theorem \ref{thm:cont_2}]
For any $\theta$ and $\kappa \in [0,1]$, 
$$
\EE_{x,y}[\Lcal(f_\theta(x),y)] = \kappa \EE_{x,y}[\Lcal(f_\theta(x),y)] + (1-\kappa) \EE_{x,y}[\Lcal(f_\theta(x),y)]. 
$$
Applying Lemma \ref{lemma:contrastive} for the first term and Lemma \ref{lemma:gen_2} yields the desired statement.

\end{proof}

\subsection{Statement and proof of Theorem \ref{thm:cont_1}}

\begin{theorem} \label{thm:cont_1}
Let $\Fcal$ be an arbitrary set of maps $x\mapsto f_{\theta_{x}}(x)$. Then, for any $\delta>0$, with probability at least $1-\delta$ over  an i.i.d. draw of $n$ i.i.d. samples  $((x_i, y_i))_{i=1}^n$, the following holds: for all maps $(x\mapsto f_{\theta_{x}}(x))\in\Fcal$ and any $\kappa \in [0,1]$, we have that
$
\EE_{x,y}[\suploss(f_{\theta_x}(x),y)]\le \kappa \EE_{x}\left[\Lcal_{\cont}^{}(x,\theta^{}_{x}) \right]+ (1-\kappa)\mathcal{J}'$, where
$
\mathcal{J}' =\frac{1}{n}\sum_{i=1}^{n}\suploss(f_{\theta_{x_i}}(x_{i}),y_i)+2\Rcal_{n}(\suploss \circ \Fcal)+c \sqrt{(\ln(1/\delta))/(2n)}. 
$
\end{theorem}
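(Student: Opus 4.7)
The plan is to mimic the structure of the proof of Theorem \ref{thm:cont_2}, but replace the uniform stability argument with a standard Rademacher complexity bound. Concretely, I would write the target quantity as a convex combination
\begin{equation*}
\EE_{x,y}[\suploss(f_{\theta_x}(x),y)] = \kappa \EE_{x,y}[\suploss(f_{\theta_x}(x),y)] + (1-\kappa)\EE_{x,y}[\suploss(f_{\theta_x}(x),y)],
\end{equation*}
and control each of the two terms by a different bound, before recombining.

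For the $\kappa$-weighted term, I would directly invoke Lemma \ref{lemma:contrastive}. That lemma is purely analytic (no probabilistic content): for any map $x\mapsto\theta_x$, the expected supervised loss is dominated by $\EE_{x}[\Lcal_{\cont}(x,\theta_x)]$ because by definition $\Lcal_{\cont}$ already absorbs the factors $\frac{1}{1-\bar\rho(x)}$, $\underline{\rho}(x)$, $\Lcal_{\cont}^{x^+,x^-}$ and $\Lcal_{\cont}^{x^+,x^-,y^-}$. Hence this bound holds simultaneously for every $(x\mapsto f_{\theta_x}(x))\in\Fcal$ without a probabilistic price.

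For the $(1-\kappa)$-weighted term, I would apply a classical uniform Rademacher bound to the function class $\suploss\circ\Fcal=\{(x,y)\mapsto\suploss(f_{\theta_x}(x),y):(x\mapsto f_{\theta_x}(x))\in\Fcal\}$. The standard argument runs as follows: define $\Phi(S)=\sup_{h\in\suploss\circ\Fcal}\bigl(\EE_{x,y}[h(x,y)]-\tfrac{1}{n}\sum_{i=1}^{n}h(x_i,y_i)\bigr)$; since the per-sample loss is bounded by $c$, $\Phi$ has bounded differences $c/n$, so McDiarmid's inequality gives $\Phi(S)\le\EE_S[\Phi(S)]+c\sqrt{\ln(1/\delta)/(2n)}$ with probability at least $1-\delta$; symmetrization then yields $\EE_S[\Phi(S)]\le 2\Rcal_n(\suploss\circ\Fcal)$. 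Rearranging produces the uniform inequality $\EE_{x,y}[\suploss(f_{\theta_x}(x),y)]\le\tfrac{1}{n}\sum_i\suploss(f_{\theta_{x_i}}(x_i),y_i)+2\Rcal_n(\suploss\circ\Fcal)+c\sqrt{\ln(1/\delta)/(2n)}=\mathcal{J}'$, which holds for every $(x\mapsto f_{\theta_x}(x))\in\Fcal$ on the same high-probability event.

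Combining the two bounds through the convex decomposition delivers the theorem. The only real subtleties, and the parts I would verify carefully, are (i) that $\suploss\circ\Fcal$ is well-defined as a class of $(x,y)$-functions (the paper writes $\theta_x$ rather than a per-sample tailoring, so the class is indexed by the whole map, not by individual $\theta_x$'s), which is needed for the symmetrization step to treat the tailored predictor as a single function of the test point, and (ii) that the high-probability event from McDiarmid's inequality is a single event on which the uniform bound holds for all $\Fcal$ and all $\kappa\in[0,1]$; the latter is automatic since $\kappa$ only appears in the deterministic convex combination after the two bounds have been established.
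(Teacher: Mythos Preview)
Your proposal is correct and essentially identical to the paper's own proof: the paper proves the Rademacher bound as a separate Lemma \ref{lemma:gen_1} using exactly the McDiarmid-plus-symmetrization argument you outline, then writes $\EE_{x,y}[\suploss(f_{\theta_x}(x),y)]=\kappa\EE_{x,y}[\suploss(f_{\theta_x}(x),y)]+(1-\kappa)\EE_{x,y}[\suploss(f_{\theta_x}(x),y)]$ and applies Lemma \ref{lemma:contrastive} to the first term and Lemma \ref{lemma:gen_1} to the second. Your remarks on points (i) and (ii) are apt and match the way the paper treats $\Fcal$ as a class of maps $x\mapsto f_{\theta_x}(x)$.
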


The following lemma is used along with  Lemma \ref{lemma:contrastive}
to prove the statement of this theorem.

\begin{lemma} \label{lemma:gen_1}
Let $\Fcal$ be an arbitrary set of maps $x\mapsto f_{\thetax}(x)$. For any $\delta>0$, with probability at least $1-\delta$ over  an i.i.d. draw of $n$ i.i.d. samples  $((x_i, y_i))_{i=1}^n$, the following holds: for all maps $(x\mapsto f_{\thetax}(x) )\in\Fcal$,
\begin{align*}
\EE_{x,y}[\Lcal(f_{\thetax}(x),y)]\le \frac{1}{n}\sum_{i=1}^{n}\Lcal(f_\thetaxi(x_i),y_i)+2\Rcal_{n}(\Lcal \circ \Fcal)+c \sqrt{\frac{\ln(1/\delta)}{2n}}.   
\end{align*}
\end{lemma}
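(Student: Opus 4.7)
The plan is to prove Lemma \ref{lemma:gen_1} by the classical Rademacher-complexity generalization argument, treating each input-dependent map $x \mapsto f_{\thetax}(x)$ as a single element of the function class $\Fcal$. The $\thetax$ dependence is absorbed into $\Fcal$: the meta-tailoring procedure is just picking one such map, and the Rademacher complexity $\Rcal_n(\Lcal \circ \Fcal)$ is defined over this set. Once this viewpoint is fixed, the remainder is the textbook McDiarmid plus symmetrization argument.

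First I would define
\[
\Phi(S) \;=\; \sup_{(x\mapsto f_{\thetax}(x))\in \Fcal} \left( \EE_{x,y}[\Lcal(f_{\thetax}(x),y)] - \frac{1}{n}\sum_{i=1}^n \Lcal(f_{\thetaxi}(x_i), y_i) \right),
\]
with $S = ((x_i,y_i))_{i=1}^n$. Using $\Lcal(f_\theta(x),y) \le c$, replacing any single sample $(x_{i_0},y_{i_0})$ changes each per-sample loss by at most $c$, hence $\Phi$ by at most $c/n$. Then McDiarmid's inequality yields, with probability at least $1-\delta$,
\[
\Phi(S) \;\le\; \EE_S[\Phi(S)] + c\sqrt{\frac{\ln(1/\delta)}{2n}}.
\]

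Next I would bound $\EE_S[\Phi(S)]$ by $2\Rcal_n(\Lcal \circ \Fcal)$ through the standard ghost-sample symmetrization. Introduce an independent i.i.d. copy $S'=((x_i',y_i'))_{i=1}^n$ so that $\EE_{x,y}[\Lcal(f_{\thetax}(x),y)] = \EE_{S'}\bigl[\frac{1}{n}\sum_i \Lcal(f_{\theta_{x_i'}}(x_i'), y_i')\bigr]$. Pulling the sup inside the expectation (by Jensen), introducing Rademacher variables $\sigma_i \in \{-1,+1\}$ that swap each pair $(x_i,y_i) \leftrightarrow (x_i',y_i')$ without changing the joint distribution, and splitting the sup gives
\[
\EE_S[\Phi(S)] \;\le\; 2\,\EE_{S,\sigma}\!\left[\sup_{(x\mapsto f_{\thetax}(x))\in\Fcal} \frac{1}{n}\sum_{i=1}^n \sigma_i \Lcal(f_{\thetaxi}(x_i), y_i)\right] \;=\; 2\Rcal_n(\Lcal\circ\Fcal).
\]
Combining the two displays gives that, with probability at least $1-\delta$, $\Phi(S) \le 2\Rcal_n(\Lcal\circ\Fcal) + c\sqrt{\ln(1/\delta)/(2n)}$, and rearranging the definition of $\Phi(S)$ yields the claim simultaneously for all $f \in \Fcal$.

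The only genuine conceptual issue, rather than an obstacle, is making sure the function class is the right object: elements of $\Fcal$ are whole maps $x \mapsto f_{\thetax}(x)$ (not just a fixed $\theta$), so the per-input tailoring $\thetaxi$ is legal because it only depends on the sample point through the chosen map and that point itself. Once this is spelled out, bounded differences and symmetrization proceed verbatim as in the classical uniform convergence theorem, and no special handling of the meta-tailoring structure is required beyond this encoding.
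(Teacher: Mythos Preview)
Your proposal is correct and follows essentially the same approach as the paper: define the supremum of the generalization gap over $\Fcal$, establish $c/n$ bounded differences, apply McDiarmid, and bound the expectation by $2\Rcal_n(\Lcal\circ\Fcal)$ via ghost-sample symmetrization with Rademacher variables. Your explicit remark that elements of $\Fcal$ are entire maps $x\mapsto f_{\thetax}(x)$ (so the per-input tailoring is absorbed into the function class) is exactly the conceptual point underlying the paper's argument.
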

\begin{proof}[Proof of Lemma \ref{lemma:gen_1}]
 Let $S=((x_i, y_i))_{i=1}^n$ and $S'=((x_i', y_i'))_{i=1}^n$. Define 
 $$
 \varphi(S)= \sup_{(x\mapsto f_{\thetax}(x) ) \in \Fcal} \EE_{x,y}[\Lcal(f_{\thetax}(x),y)]-\frac{1}{n}\sum_{i=1}^{n}\Lcal(f_\thetaxi(x_i),y_i).
 $$ To apply McDiarmid's inequality to $\varphi(S)$, we compute an upper bound on $|\varphi(S)-\varphi(S')|$ where  $S$ and $S'$ be two training datasets differing by exactly one point of an arbitrary index $i_{0}$; i.e.,  $S_i= S'_i$ for all $i\neq i_{0}$ and $S_{i_{0}} \neq S'_{i_{0}}$. Then,
$$
\varphi(S')-\varphi(S) \le\sup_{(x\mapsto f_{\thetax}(x) ) \in \Fcal}\frac{\Lcal(f_{\theta(x_{i_0})}(x_{i_0}),y_{i_0})-\Lcal(f_{\theta(x'_{i_0})}(x'_{i_0}),y'_{i_0})}{n} \le \frac{c}{n}.
$$
Similarly, $\varphi(S)-\varphi(S') \le \frac{c}{n}$. Thus, by McDiarmid's inequality, for any $\delta>0$, with probability at least $1-\delta$,
$$
\varphi(S) \le  \EE_{S}[\varphi(S)] + c \sqrt{\frac{\ln(1/\delta)}{2n}}.
$$
Moreover, with $f(x)=f_{\thetax}(x)$,
\begin{align*}
\EE_{S}[\varphi(S)] 
 &  = \EE_{S}\left[\sup_{f \in \Fcal} \EE_{S'}\left[\frac{1}{n}\sum_{i=1}^{n}\Lcal(f_{\theta(x_i')}(x_i'),y_i')\right]-\frac{1}{n}\sum_{i=1}^{n}\Lcal(f_\thetaxi(x_i),y_i)\right]   
 \\ &  \le\EE_{S,S'}\left[\sup_{f \in \Fcal} \frac{1}{n}\sum_{i=1}^n (\Lcal(f_{\theta(x'_i)}(x'_i),y'_i)-\Lcal(f_\thetaxi(x_i),y_i)\right] 
 \\ & \le \EE_{\xi, S, S'}\left[\sup_{f\in \Fcal} \frac{1}{n}\sum_{i=1}^n  \xi_i(\Lcal(f_{\theta(x'_{i})}(x'_{i}),y'_{i})-\Lcal(f_\thetaxi(x_i),y_i))\right]
 \\ &  \le2\EE_{\xi, S}\left[\sup_{f\in \Fcal} \frac{1}{n}\sum_{i=1}^n  \xi_i\Lcal(f_\thetaxi(x_i),y_i))\right] 
\end{align*}
where  the fist line follows the definitions of each term, the second line uses the Jensen's inequality and the convexity of  the 
supremum, and the third line follows that for each $\xi_i \in \{-1,+1\}$, the distribution of each term $\xi_i (\Lcal(f_{\theta(x'_i)}(x'_i),y'_i)-\Lcal(f_\thetaxi(x_i),y_i))$ is the  distribution of  $(\Lcal(f_{\theta(x'_i)}(x'_i),y'_i)-\Lcal(f_\thetaxi(x_i),y_i))$  since $\bar S$ and $\bar S'$ are drawn iid with the same distribution. The forth line uses the subadditivity of supremum. 
\end{proof}

\begin{proof}[Proof of Theorem \ref{thm:cont_1}]
For any $\theta$ and $\kappa \in [0, 1]$, 
$$
\EE_{x,y}[\Lcal(f_\theta(x),y)] = \kappa \EE_{x,y}[\Lcal(f_\theta(x),y)] + (1-\kappa) \EE_{x,y}[\Lcal(f_\theta(x),y)]. 
$$
Applying Lemma \ref{lemma:contrastive} for the first term and Lemma \ref{lemma:gen_1} yields the desired statement.  

\end{proof}

\subsection{Proof of Theorem \ref{thm:cont_3}}

\begin{proof}[Proof of Theorem \ref{thm:cont_3}]
Let $\theta^{(1:H)}$ be fixed. We first prove the first statement for the convexity. The function $\theta^{(H+1)} \mapsto\psi_{x, y,y^{-}}(\theta^{(H+1)})$ is convex, since it is a composition of a convex function $\ell_{\cont}$ and a affine function $z\mapsto(z_{y}-z_{y^{-}})h_{\theta^{(1:H)}}(x)) $. The function $\theta^{(H+1)} \mapsto \EE_{y^- \sim\mu_{y^{-}}(x)}[\psi_{x, y,y^{-}}(\theta^{(H+1)})-\psi_{x,1,2}([u_h^+,u_h^-]^{\top})]$ is convex since the expectation and affine translation preserves the convexity. Finally, $\theta^{(H+1)} \mapsto\Lcal_{\cont}^{x^{+},x^{-},y^- }(x,\theta^{(1:H)},\theta^{(H+1)})$ is convex since it is the piecewise maximum of the convex functions 
$$
\theta^{(H+1)} \mapsto \EE_{y^- \sim\mu_{y^{-}}(x)}[\psi_{x, y,y^{-}}(\theta^{(H+1)})-\psi_{x,1,2}([u_h^+,u_h^-]^{\top})]
$$ for each $y$.

We now prove the second statement of the theorem for the inequality. Let us write $ \mu_{x^{+}}=  \mu_{x|y}$ and $\mu_{x^{-}}=  \mu_{x|y^-}$. Let $U=[u_1,u_2]^{\top} \in \RR^{m_y \times( m_H+1)}$ where $u_y=\EE_{x \sim  \mu_{x|y}}[h_{\theta^{(1:H)}}(x )]$ for $y\in \{1,2\}$. Then, 
$$u_h^+=\EE_{x^+ \sim \mu_{x^{+}}(x)}[h_{\theta^{(1:H)}}(x^+ )]=\EE_{x^+ \sim \mu_{x|y}}[h_{\theta^{(1:H)}}(x^+ )]=u_{y},$$ 
and 
$$u_h^-=\EE_{x^- \sim \mu_{x^{-}}(x)}[h_{\theta^{(1:H)}}(x^- )]=\EE_{x^- \sim \mu_{x|y^-}}[h_{\theta^{(1:H)}}(x^- )]=u_{y^{-}}.
$$
Therefore, 
$$
\psi_{x,1,2}([u_h^+,u_h^-]^{\top})=\psi_{x,y,y^{-}}(U),
$$
with which
$$
\Lcal_{\cont}^{x^{+},x^{-},y^- }(x, \theta)=\max_{y}\EE_{y^- \sim\mu_{y^{-}}(x)}[\psi_{x, y,y^{-}}(\theta^{(H+1)})-\psi_{x,y,y^{-}}(U)].
$$ 
Since $U$ and $\theta^{(1:H)}$ do not contain $\theta^{(H+1)}$, for any $U,\bar\theta^{(1:H)}$, there exists $\theta^{(H+1)}=U$ for which $\psi_{x, y,y^{-}}(\theta^{(H+1)})-\psi_{x,y,y^{-}}(U)=0$ and hence $\Lcal_{\cont}^{x^{+},x^{-},y^- }(x, \theta)=0$. Therefore,
\begin{align*}
& \inf_{\theta^{(H+1)}\in \RR^{m_y \times( m_H+1)}} \Lcal_{\cont}^{x^{+},x^{-}}(x,\theta^{(1:H)}) +\Lcal_{\cont}^{x^{+},x^{-},y^-}(x, \theta^{(1:H)},\theta^{(H+1)})
\\ & =\Lcal_{\cont}^{x^{+},x^{-}}(x,\theta^{(1:H)}) +\inf_{\theta^{(H+1)}\in \RR^{m_y \times( m_H+1)}} \Lcal_{\cont}^{x^{+},x^{-},y^-}(x, \theta^{(1:H)},\theta^{(H+1)})
\\ & \le \Lcal_{\cont}^{x^{+},x^{-}}(x,\theta^{(1:H)}) .
\end{align*}
\end{proof}

\subsection{Proof of Remark \ref{coro:gen_1}}
\begin{proof}[Proof of Remark \ref{coro:gen_1}]
For any $\theta$, 
$$
\EE_{x,y}[\Lcal(f_\theta(x),y)] = \inf_{\kappa \in [0,1]} \kappa \EE_{x,y}[\Lcal(f_\theta(x),y)] + (1-\kappa) \EE_{x,y}[\Lcal(f_\theta(x),y)].
$$
Applying  Lemma \ref{lemma:gen_2} for Theorem \ref{thm:cont_2} (and Lemma \ref{lemma:gen_1} for Theorem \ref{thm:cont_1}) to the second term and the assumption $\EE_{x,y}[\Lcal(f_{\theta}(x),y)] \le\EE_{x}[\Lcal_{\un}(f_{\theta}(x))]$ to the first term yields the desired statement. 

\end{proof}
\newpage
\section{Details and description of \method\ }\label{app:method}
In this section we describe~\method\ in greater detail: its implementation, different variants and run-time costs. Note that, although this section is written from the perspective of meta-tailoring,~\method\ is also applicable to meta-learning, we provide pseudo-code in algorithm~\ref{alg:meta-learning}. The main idea behind~\method\ is to optimize only conditional normalization (CN) parameters $\gamma^{(l)}, \beta^{(l)}$ in the inner loop and optimize all the other weights $w$ in the outer loop. 
To simplify notation for implementation, in this subsection only, we overload notations to make them work over a mini-batch as follows. Let $b$ be a (mini-)batch size. Given $X\in\RR^{b \times m_0}$, $\gamma\in\RR^{b \times \sum_l m_l}$ and $\beta\in\RR^{b \times \sum_l m_l}$, let $(f_{w,\gamma,\beta}(X))_i=f_{w,\gamma_i,\beta_i}(X_i)$ where $X_i$, $\gamma_i$, and $\beta_i$ are the transposes of the $i$-th row vectors of $X$, $\gamma$ and $\beta$, respectively. Similarly, $\suploss$ and $\tailorloss$ are used over a mini-batch. We also refer to $\theta=(w,\gamma,\beta)$.   

\paragraph{Initialization of $\gamma,\beta$} In the inner loop we always initialize $\gamma = \mathbf{1}_{b,\sum_l m_l}, \beta = \mathbf{0}_{b,\sum_l m_l}$. More complex methods where the initialization of these parameters is meta-trained are also possible. However, we note two things: 
\begin{enumerate}
    \item By initializing to the identity function, we can pick an architecture trained with regular inductive learning, add CN layers without changing predictions and perform tailoring. In this manner, the prediction algorithm is the same regardless of whether we trained with meta-tailoring or without the CN parameters.
    \item We can add a previous normalization layer with weights $\gamma'^{(l)},\beta'^{(l)}$ that are trained in the outer loop, having a similar effect than meta-learning an initialization. However, we do not do it in our experiments.
\end{enumerate}

\paragraph{First and second order versions of~\method:} $w$ affect $\suploss$ in two ways: first, they directly affect the evaluation $f_{w,\gamma_s,\beta_s}(X)$ by being weights of the neural network; second, they affect $\nabla_\beta\tailorloss,\nabla_\gamma\tailorloss$ which affects $\gamma_s,\beta_s$ which in turn affect $\suploss$. Similar to {\sc MAML}~\citep{finn2017model}, we can implement two versions: in the first order version we only take into account the first effect, while in the second order version we take into account both effects. The first order version has three advantages:
\begin{enumerate}
    \item It is very easy to code: the optimization of the inner parameters and the outer parameters are detached and we simply need to back-propagate $\tailorloss$ with respect to $\beta,\gamma$ and $\suploss$ with respect to $w$. This version is easier to implement than most meta-learning algorithms, since the parameters in the inner and outer loop are different.
    \item It is faster: because we do not back-propagate through the optimization, the overall computation graph is smaller. 
    \item It is more stable to train: second-order gradients can be a bit unstable to train; this required us to lower the inner tailoring learning rate in experiments of section~\ref{subsec:planets} for the second-order version.
\end{enumerate}
The second-order version has one big advantage: it optimizes the true objective, taking into account how $\tailorloss$ will affect the update of the network. This is critical to linking the unsupervised loss to best serve the supervised loss by performing informative updates to the CN parameters. 

\paragraph{WarpGrad-inspired stopping of gradients and subsequent reduction in memory cost:} WarpGrad~\citep{flennerhag2019meta}\ was an inspiration to~\method\, suggesting to interleave layers that are adapted in the inner loop with layers only adapted in the outer loop. In contrast to WarpGrad, we can evaluate inputs (in meta-tailoring) or tasks (in meta-learning) in parallel, which speeds up training and inference. This also simplifies the code because we do not have to manually perform batches of tasks by iterating through them. 

WarpGrad also proposes to stop the gradients between inner steps; we include this idea as an optional operation in~\method, as shown in line 12 of~\ref{alg:CNGRAD}. The advantage of adding it is that it decreases the memory cost when performing multiple inner steps, as we now only have to keep in memory the computation graph of the last step instead of all the steps, key when the networks are very deep like in the experiments of section~\ref{sec:adv}. Another advantage is that it makes training more stable, reducing variance, as back-propagating through the optimization is often very noisy for many steps. At the same time it adds bias, because it makes the greedy assumption that locally minimizing the decrease in outer loss at every step will lead to low overall loss after multiple steps.

\paragraph{Computational cost:} in~\method\ we perform multiple forward and backward passes, compared to a single forward pass in the usual setting. In particular, if we perform $s$ tailoring steps, we execute $(s+1)$ forward steps and $s$ backward steps, which usually take the same amount of time as the forward steps. Therefore, in its naive implementation, this method takes about $2s+1$ times more than executing the regular network without tailoring.

However, it is well-known that we can often only adapt the higher layers of a network, while keeping the lower layers constant. Moreover, our proof about the capacity of~\method\ to optimize a broad range of inner losses only required us to adapt the very last CN layer $\gamma^{(H)},\beta^{(H)}$. This implies we can put the CN layers only on the top layer(s). In the case of only having one CN layer at the last network layer, we only require one initial full forward pass (as we do without tailoring). Then, we have $s$ backward-forward steps that affect only the last layer, thus costing $\frac{1}{H}$ in case of layers of equivalent cost. This leads to a factor of $1+\frac{2s}{H}$ in cost, which for $s$ small and $H$ large (typical for deep networks), is a very small overcost. Moreover, for tailoring and meta-tailoring, we are likely to get the same performance with smaller networks, which may compensate the increase in cost.

\paragraph{Meta-learning version:}~\method\ can also be used in meta-learning, with the advantage of being provably expressive, very efficient in terms of parameters and compute, and being able to parallelize across tasks. We show the pseudo-code for few-shot supervised learning in algorithm~\ref{alg:meta-learning}. There are two changes to handle the meta-learning setting: first, in the inner loop, instead of the unsupervised tailoring loss we optimize a supervised loss on the training (support) set. Second, we want to share the same inner parameters $\gamma,\beta$ for different samples of the same task. To do so we add the operation "repeat\_interlave" (PyTorch notation), which makes $k$ contiguous copies of each parameter $\gamma,\beta$, before feeding them to the network evaluation. In doing so, gradients coming from different samples of the same task get pooled together. At test time we do the same for the $k'$ queries ($k'$ can be different than $k$). Note that, in practice, this pooling is also used in meta-tailoring when we have more than one data augmentation within $\tailorloss$.
\begin{algorithm}[t]
\SetAlgoLined
\DontPrintSemicolon
\begin{flushleft}
  \SetKwFunction{algo}{algo}\SetKwFunction{proc}{proc}
  \SetKwProg{myalg}{Algorithm}{}{}
  \SetKwProg{myproc}{Subroutine}{}{}
  \myproc{Training($f$, $\suploss$, $\lambda_{sup}$, $\tailorloss$, $\lambda_{tailor}$, $steps$,$\supdata$)}{
  randomly initialize $w$ \tcp*[r]{All parameters except $\gamma,\beta$; trained in outer loop}
  \While{not done}{
    \For(\tcp*[f]{$b$ batch size}){$0\leq i\leq n/b$ }{
        $X,Y = x_{ib:i(b+1)}, y_{ib:i(b+1)}$\;
        $\gamma_0 = \mathbf{1}_{b,\sum_l m_l}$\;
        $\beta_0 = \mathbf{0}_{b,\sum_l m_l}$\;
        
        \For{$1\leq s \leq steps$}{
            $\gamma_s = \gamma_{s-1} - \lambda_{tailor}\nabla_{\gamma}\tailorloss(w,\gamma_{s-1},\beta_{s-1},X)$\;\tcp*[r]{Inner step w.r.t. $\gamma$}

            $\beta_s = \beta_{s-1} - \lambda_{tailor}\nabla_{\beta}\tailorloss(w,\gamma_{s-1},\beta_{s-1},X)$\;\tcp*[r]{Inner step w.r.t. $\beta$}

        $\gamma_s,\beta_s = \gamma_s.detach(), \beta_s.detach()\;$ \tcp*[r]{Optional operation, only in $1^{st}$ order CNGrad: WarpGrad detach to avoid back-proping through multiple steps; reducing memory, and increasing stability, but adding bias.}
        $w = w - \lambda_{sup}\nabla_w \suploss\left(f_{w,\gamma_s,\beta_s}(X), Y)\right)$\; \tcp*[r]{Outer step}
        }
        
    }
  }
  \Return $w$\;
  }

  \myproc(\tcp*[f]{For meta-tailoring \& tailoring}){Prediction($f$, $w$, $\tailorloss$, $\lambda$, $steps$, $X$)}{
    \tcp*[r]{X contains multiple inputs, with independent tailoring processes}
    $b = X.shape[0]$ \tcp*[r]{number of inputs}
    $\gamma_0 = \mathbf{1}_{b,\sum_l m_l}$\;
    $\beta_0 = \mathbf{0}_{b,\sum_l m_l}$\;
    
    \For{$1\leq s \leq steps$}{
        $\gamma_s = \gamma_{s-1} - \lambda\nabla_{\gamma}\tailorloss(w,\gamma_{s-1},\beta_{s-1},X)$\;

        $\beta_s = \beta_{s-1} - \lambda\nabla_{\beta}\tailorloss(w,\gamma_{s-1},\beta_{s-1},X)$\;
    }
  \Return $f_{w,\gamma_{steps},\beta_{steps}}(X)$\;
  }
  \end{flushleft}
\caption{\method \label{alg:CNGRAD} for meta-tailoring}
\end{algorithm}

\begin{algorithm}[t]
\SetAlgoLined
\begin{flushleft}
\DontPrintSemicolon
  \SetKwFunction{algo}{algo}\SetKwFunction{proc}{proc}
  \SetKwProg{myalg}{Algorithm}{}{}
  \SetKwProg{myproc}{Subroutine}{}{}
  \myproc{Meta-training($f$, $\suploss$, $\lambda_{inner}$, $\lambda_{outer}$, $steps$,$\mathcal{T}$)}{
  randomly initialize $w$ \tcp*[r]{All parameters except $\gamma,\beta$; trained in outer loop}
  \While{not done}{
    \For(\tcp*[f]{$b$ batch size}){$0\leq i\leq n/b$ }{
        $X_{train}, Y_{train} = [\ ], [\ ]$\;
        $X_{test}, Y_{test} = [\ ], [\ ]$\;
        \For {$ib\leq j\leq i(b+1)$}{
            $(inp, out) \sim_k \mathcal{T}_j$\tcp*[r]{Take $k$ samples from each task for training}
            $X.append\left(inp\right) ; Y.append\left(out\right)$\;
            $(query, target) \sim_k' \mathcal{T}_j$\tcp*[r]{Take $k'$ samples from each task for testing}
            $X.append\left(query\right) ; Y.append\left(target\right)$\;
        }
        \tcp*[f]{We can now batch evaluations of multiple tasks}
        $X_{train}, Y_{train} = concat\left(X_{train}, dim=0\right), concat\left(Y_{train}, dim=0\right)$\;
        $X_{test}, Y_{test} = concat\left(X_{test}, dim=0\right),  concat\left(Y_{test}, dim=0\right)$\;
        $\gamma_0 = \mathbf{1}_{b,\sum_l m_l}$\;
        $\beta_0 = \mathbf{0}_{b,\sum_l m_l}$\;
        \For{$1\leq s \leq steps$}{
             \tcp*[f]{We now repeat the CN parameters $k$ times so that samples from the same task share the same CN parameters}\;
            $\gamma^{tr}_{s-1}, \beta^{tr}_{s-1} = \gamma_{s-1}.repeat\_interleave(k,1),  \beta_{s-1}.repeat\_interleave(k,1)$\;
            $\gamma_s = \gamma_{s-1} - \lambda_{innner}\nabla_{\gamma}\suploss(f_{w,\gamma^{tr}_{s-1},\beta^{tr}_{s-1}}(X_{train}), Y_{train})$\;
            $\beta_s = \beta_{s-1} - \lambda_{innner}\nabla_{\beta}\suploss(f_{w,\gamma^{tr}_{s-1},\beta^{tr}_{s-1}}(X_{train}), Y_{train})$\;
            $\gamma^{test}_{s}, \beta^{test}_{s} = \gamma_{s}.repeat\_interleave(k',1),  \beta_{s}.repeat\_interleave(k',1)$\;
            $w = w - \lambda_{outer}\nabla_w \suploss\left(f_{w,\gamma^{test}_s,\beta^{test}_s}(X_{test}), Y_{test})\right)$\;
            $\beta_s,\gamma_s = \beta_s.detach(), \gamma_s.detach()$\; \tcp*[f]{WarpGrad detach to not backprop through multiple steps}
        }
    }
  }
  \Return $w$\;
  }

  \myproc{Meta-test($f$, $w$, $\suploss$, $\lambda_{inner}$,$steps$,$X_{train}$, $Y_{train}$, $X_{test}$)}{
    \tcp*[r]{Assuming a single task, although we could evaluate multiple tasks in parallel as in meta-training.}
    $\gamma_0 = \mathbf{1}_{1,\sum_l m_l}$ \tcp*[r]{single $\gamma,\beta$ because we only have one task}
    $\beta_0 = \mathbf{0}_{1,\sum_l m_l}$\;
    \For{$1\leq s \leq steps$}{
            $\gamma^{tr}_{s-1}, \beta^{tr}_{s-1} = \gamma_{s-1}.repeat\_interleave(k,1),  \beta_{s-1}.repeat\_interleave(k,1)$\;
            $\gamma_s = \gamma_{s-1} - \lambda_{innner}\nabla_{\gamma}\suploss(f_{w,\gamma^{tr}_{s-1},\beta^{tr}_{s-1}}(X_{train}), Y_{train})$\;
            $\beta_s = \beta_{s-1} - \lambda_{innner}\nabla_{\beta}\suploss(f_{w,\gamma^{tr}_{s-1},\beta^{tr}_{s-1}}(X_{train}), Y_{train})$\;
    }
    $\gamma^{test}_{steps}, \beta^{test}_{steps} = \gamma_{steps}.repeat\_interleave(k',1),  \beta_{steps}.repeat\_interleave(k',1)$\;
  \Return $f_{w,\gamma^{test}_{steps},\beta^{test}_{steps}}(X_{test})$\;
  }
 \end{flushleft}
\caption{\method \label{alg:meta-learning} for meta-learning}
\end{algorithm}

\clearpage
\newpage
\section{Experimental details of physics experiments}\label{app:planets}
\paragraph{Dataset generation}
As mentioned in the main text, 5-body systems are chaotic and most random configurations are unstable. To generate our dataset we used Finite Differences to optimize 5-body dynamical systems that were stable for 200 steps (no planet collisions and no planet outside a predetermined grid) and then picked the first 100 steps of their trajectories, to ensure dynamical stability. To generate each trajectory, we randomly initialized 5 planets within a 2D grid of size $w=600,h=300$, with a uniform probability of being anywhere in the central grid of size $w/2,h/2$, each with a mass sampled from a uniform between $[0.15,0.25]$ (arbitrary units) and with random starting velocity initialized with a Gaussian distribution. We then use a 4th order Runge-Kutta integrator to accurately simulate the ODE of the dynamical system until we either reach 200 steps, two planets get within a certain critical distance from each other or a planet gets outside the pre-configured grid. If the trajectory reached 200 steps, we added it to the dataset; otherwise we made a small random perturbation to the initial configuration of the planets and tried again. If the new perturbation did not reach 200 steps, but lasted longer we kept the perturbation as the new origin for future initialization perturbations, otherwise we kept our current initialization. Once all the datasets were generated we picked those below a threshold mean mass and partitioned them randomly into train and test. Finally, we normalize each of the 25 dimensions (5 planets and for each planet $x$,$y$,$v_x$,$v_y$,$m$) to have mean zero and standard deviation one. For inputs, we use each state and as target we use the next state; therefore, each trajectory gives us 100 pairs.

For more details, we attach the code that generated the dataset.
\paragraph{Implementation of tailoring, meta-tailoring and~\method}

All of our code is implemented in PyTorch~\citep{paszke2019pytorch}, using the higher library~\citep{grefenstette2019generalized}(\url{https://github.com/facebookresearch/higher}) to implement the second-order version of~\method. We implemented a 3-layer feedforward neural network, with a conditional normalization layer after each layer except the final regression layer. The result of the network was added to the input, thus effectively predicting the delta between the current state and the next state. For both the first-order and second-order versions of~\method, we used the detachment of WarpGrad (line 12 in algorithm~\ref{alg:CNGRAD}). For more details, we also attach the implementation of the method.
\paragraph{Compute and hyper-parameter search}
To keep the evaluation as strict as possible, we searched all the hyper-parameters affecting the inductive baseline and our tailoring versions with the baseline and simply copied these values for tailoring and meta-tailoring. For the latter two, we also had to search for $\lambda_{tailor}$.

The number of epochs was 1000, selected with the inductive baseline, although more epochs did not substantially affect performance in either direction. We note that meta-tailoring performance plateaued earlier in terms of epochs, but we left it the same for consistency. Interestingly, we found that regularizing the physics loss (energy and momentum conservation) helped the inductive baseline, even though the training data already has 0 physics loss. We searched over $[10^{-4},3\cdot10^{-4},10^{-3},3\cdot 10^{-3},10^{-2}]$ for the weight assigned to the physics loss and chose $2\cdot 10^{-3}$ for best performance in the inductive baseline. To balance between energy and momentum losses we multiplied the momentum loss by $10$ to roughly balance their magnitudes before adding them into a single physics loss, this weighting was not searched. We copied these settings for meta-tailoring.

In terms of the neural network architecture, we chose a simple model with 3 hidden layers of the same size and tried $[128,256,512]$ on the inductive baseline, choosing $512$ and deciding not to go higher for compute reasons and because we were already able to get much lower training loss than test loss. We copied these settings for the meta-tailoring setup. We note that since there are approximately $O(m_h^2)$ weight parameters, yet only $O(m_h)$ affine parameters used for tailoring, adding tailoring and meta-tailoring increase parameters roughly by a fraction $O(1/m_h)$, or about $0.2\%$. Also in the inductive baseline, we tried adding Batch Normalization~\citep{ioffe2015batch}, since it didn't affect performance we decided not to add it.

We chose the tailoring step size parameter by trying $[10^{-5}, 10^{-4},10^{-3},10^{-2}]$, finding $10^{-3}$ worked well while requiring less steps than using a smaller step size. We therefore used this step for meta-tailoring as well, which worked well for first-order~\method, but not for second-order~\method, whose training diverged. We thus lowered the tailoring step size to $10^{-4}$ for the second-order version, which worked well. We also tried clipping the inner gradients to increase the stability of the second-order method; since gains on preliminary experiments were small, we decided to keep it out for simplicity.

For meta-tailoring we only tried $2$ and $5$ tailoring steps (we wanted more than one step to show the algorithm capability, but few tailoring steps to keep inference time competitive). Since they worked similarly well, we chose $2$ steps to have a faster model. For the second-order version we also used $2$ steps, which performed much better than the inductive baseline and tailoring, but worse than the first-order version reported in the main text(about $20\%$ improvement of the second-order version vs. $7\%$ of tailoring and $35\%$ improvement of the first-order version).

For the baseline of optimizing the output we tried a step size of $10^{-4},10^{-3},10^{-2},10^{-1}$. Except for a step size of $10^{-1}$, results optimized the physics loss and always achieved a very small improvement, without overfitting to the physics loss. We thus chose a big learning rate and high number of steps to report the biggest improvement, of $0.7\%$.

\paragraph{Runs, compute and statistical confidence:} we ran each method 2 times and averaged the results. Note that the baseline of optimizing the output and \textit{tailoring} start from the inductive learning baseline, as they are meant to be methods executed after regular inductive training. This is why both curves start at the same point in Figure~\ref{fig:planet-results}. For those methods, we report the standard deviation of the mean estimate of the \textit{improvement}, since they are executed on the same run. Note that the standard deviation of the runs would be higher, but less appropriate. For meta-tailoring, we do use the standard deviation of the mean estimate of both runs, since they are independent from the inductive baseline.

All experiments were performed on a GTX 2080 Ti with 4 CPU cores.
\section{Experimental details on real pendulum}\label{app:soft_pendulum}

We modify the real pendulum of Hamiltonian Neural Networks~\citep{greydanus2019hamiltonian}. In particular we pick the energy of the system and use it as a tailoring loss.~\citet{greydanus2019hamiltonian} train a vanilla MLP and show that its non-conservation of energy results in poor generalization from train to test for long-term predictions. With HNNs that automatically discover an energy function and encode hamiltonian dynamics into the network architecture, the system conserves this proxy energy even in long predictions, resulting in better generalization. We meta-tailor the vanilla MLP, with no change in its architecture, beyond adding CN layers to efficiently perform tailoring. We try different inner learning rates ($1e-3,1e-2,1e-1,1e0$) as well as number of steps ($1,2,3$) and evaluate on long-term \textit{training} loss. Since training is 4 times as long as test, we divide training into 4 equally-big trajectories and choose the configuration with the best loss: $3$ steps and $1e-1$ inner learning rate. It is worth noting that these long term predictions use scipy's ODE integrator, which is also used for the vanilla MLP as well as for HNNs. We see that, by not fully enforcing energy conservation, meta-tailoring improves over both an inductive baseline of the same architecture and HNNs.

Experiments were performed with a Volta V-100 and 10 CPU cores, taking a couple of hours to run in total.

\section{Experimental details on contrastive learning}\label{app:contrastive}
We take the implementation of SimCLR~\cite{chen2020simple} from \url{https://github.com/leftthomas/SimCLR} evaluating on CIFAR-10~\cite{krizhevsky2009learning}. 

As detailed in the main text we train the vanilla SimCLR to get an unsupervised representation. We than train \textit{only} the linear layer with different amounts of training data, from 50 to 5000 points per class. Vanilla SimCLR follows regular inductive learning with supervised labels for the linear layer. Meta-tailoring uses the same augmentations provided by SimCLR and minimizes the SimCLR loss on each particular input before feeding the tailored representations to the linear layer. The linear layer is trained to take these adapted representations. We tried different hyper-parameters: $[4,8,16]$ augmentations, $1,2,$ inner optimization steps, inner learning rate of $[1e-1,3e-1,1e0,3e0,1e1,3e1]$ and whether to tailor the CN layers of the CNN representation or tailor the representations $h$ directly. We found very consistent results where all stable inner optimizations improved over vanilla SimCLR, and longer optimizations with larger learning rates and more augmentations gave bigger improvements. Tailoring the CN layers or the representations directly didn't make a big effect, the latter being slightly more stable, and providing somewhat larger results. It is also much faster as we do not need to back-prop back through the CNN. We thus chose $3$ steps $1e1$ learning rate, 16 augmentations and tailoring the representations directly. For TTT we kept the $3$ steps and tried $0,1e-4,1e-3,1e-2,1e-1,1e0,1e1$ learning rates. We noticed that all these inner optimizations were stable, yet performance degraded with learning rate. Thus the best learning rate was $0$, equivalent to not doing TTT.

For all methods we follow the code-base and keep the best validation. Keeping running averages or choosing concrete epochs gave very similar results because of the stability of training a linear layer. We averaged over 5 different trainings of the linear layer, all using the same SimCLR base. The TTT baseline uses the best SimCLR epoch for each of these 5 runs, so that using TTT with lr=0 gives exactly the same results as the baseline.

For TTT we trained on the rotation prediction task proposed in the original paper. To minimize differences, we use the same architecture as the MLP from SimCLR, except with 4 output logits, one per rotation. It achieves $80.5\%$ test accuracy on rotation prediction. TTT proposes to back-propagate this rotation prediction loss back at test-time, but does not take this procedure into account at training time. We consistently find that TTT worsens the performance, with the gap becoming worse as we increase the learning rate. We think the reason why this loss is helpful in a very similar dataset and architecture in~\citet{sun2019test}, yet hurts performance in this case is due to two factors. First, it can be observed in the original paper that rotation-prediction provides consistent, but small gains, in the 1-sample case, with much larger gains in its online multi-sample version. Second,~\citet{sun2019test} focus on out-of-distribution generalization, where weights are trained on a different data distribution and are thus sub-optimal. The linear layer receiving OOD inputs in this same-distribution case hurts performance, but in their OOD application, even the unmodified inputs were already OOD. Meta-tailoring takes the adaptation into account, thus the inputs of the linear layer are always in-distribution, thus being able to help performance. 

\begin{figure}
    \centering
    \includegraphics[width=0.75\linewidth]{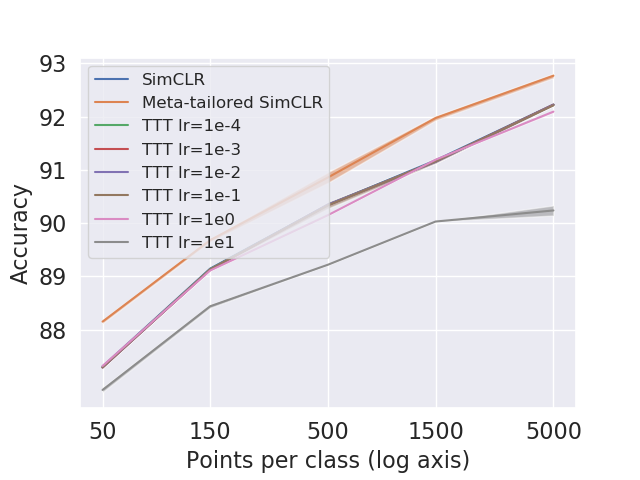}
    \caption{Test-time training (TTT) with its original rotation-prediction auxiliary task performs worse than vanilla SimCLR. Performance degrades as we increase the inner learning rate, thus increasing its power. }
    \label{fig:TTT_contrastive}
\end{figure}

Experiments were performed with a set of Volta V-100 and 10 CPU cores. SimCLR takes around a day to train. All other experiments training the linear layer from different initializations and for all data quantities take a few hours for a single set of hyper-parameters.
\section{Toy adversarial examples experiment}
We illustrate the tailoring process with a simple illuminating example using the data from~\citet{ilyas2019adversarial}.  They use discriminant analysis on the data in Figure~\ref{fig:toy_adversarial} and obtain the purple linear separator.  It has the property that, under assumptions about Gaussian distribution of the data, points above the line are more likely to have come from the blue class, and those below, from the red class.
This separator is not very adversarially robust, in the sense that, for many points, a perturbation with a small $\delta$ would change the assigned class. We improve the robustness of this classifier by tailoring it using the loss  

$\tailorloss(x,\theta) = \textrm{KL}(\phi(f_\theta(x))\; || \; \phi(f_\theta(x+\argmax_{|\delta|<\varepsilon} \sum_j e^{f_\theta(x+\delta)_j})))$, where $\textrm{KL}$ represents the KL divergence, $\phi$ is the logistic function, and $\phi(f_\theta(x)_i)$ is the probability of $x$ being in class $i$, so that $\phi(f_\theta(x))$ represents the entire class distribution.


With this loss, we can adjust our parameters $\theta$ so that the KL divergence between our prediction at $x$ is closer to the prediction at perturbed point $x+\delta$, over all perturbations in radius $\varepsilon$. Note that we initialized the models with the weights of~\citet{cohen2019certified} to speed up training in all ImageNet experiments and to avoid training divergence for CIFAR-10 with $\sigma=1$ (this divergence was already noted by~\citet{Zhai2020MACER}).
Each of the curves in Figure 3 corresponds to a decision boundary induced by tailoring the original separator with a different value for the maximum perturbation $\varepsilon$. Note that the resulting separators are non-linear, even though we are tailoring a linear separator, because the tailoring is local to the prediction point.  We also have the advantage of being able to choose different values of $\varepsilon$ at prediction time.

\paragraph{Hyper-parameters} the model does not have any hyper-parameters, as we use the model from~\citet{ilyas2019adversarial}, which is based on the mean $\mu$ and standard deviation $\sigma$ of the Gaussians. For tailoring, we used a $5\times5$ grid to initialize the inner optimization to find the point of highest probability within the $\epsilon$-ball. Using a single starting point did not work as gradient descent found a local optima. Using more ($10\times10$) did not improve results further, while increasing compute. We also experimented between doing a weighted average of the predictions by their energy to compute the tailoring loss or picking the element with the biggest energy. Results did not seem to differ much (likely because likelihood distributions are very peaked), so we picked the simplest option of imitating the element of highest probability. Doing a single tailoring step already worked well (we tried step sizes of $10^{-1},1,10,30$ with $10$ working best), so we kept that for simplicity and faster predictions.

Regarding compute, this experiment can be generated in a few minutes using a single GTX 2080 Ti.
\begin{figure}[h]
\begin{center}
    \includegraphics[width=0.5\textwidth]{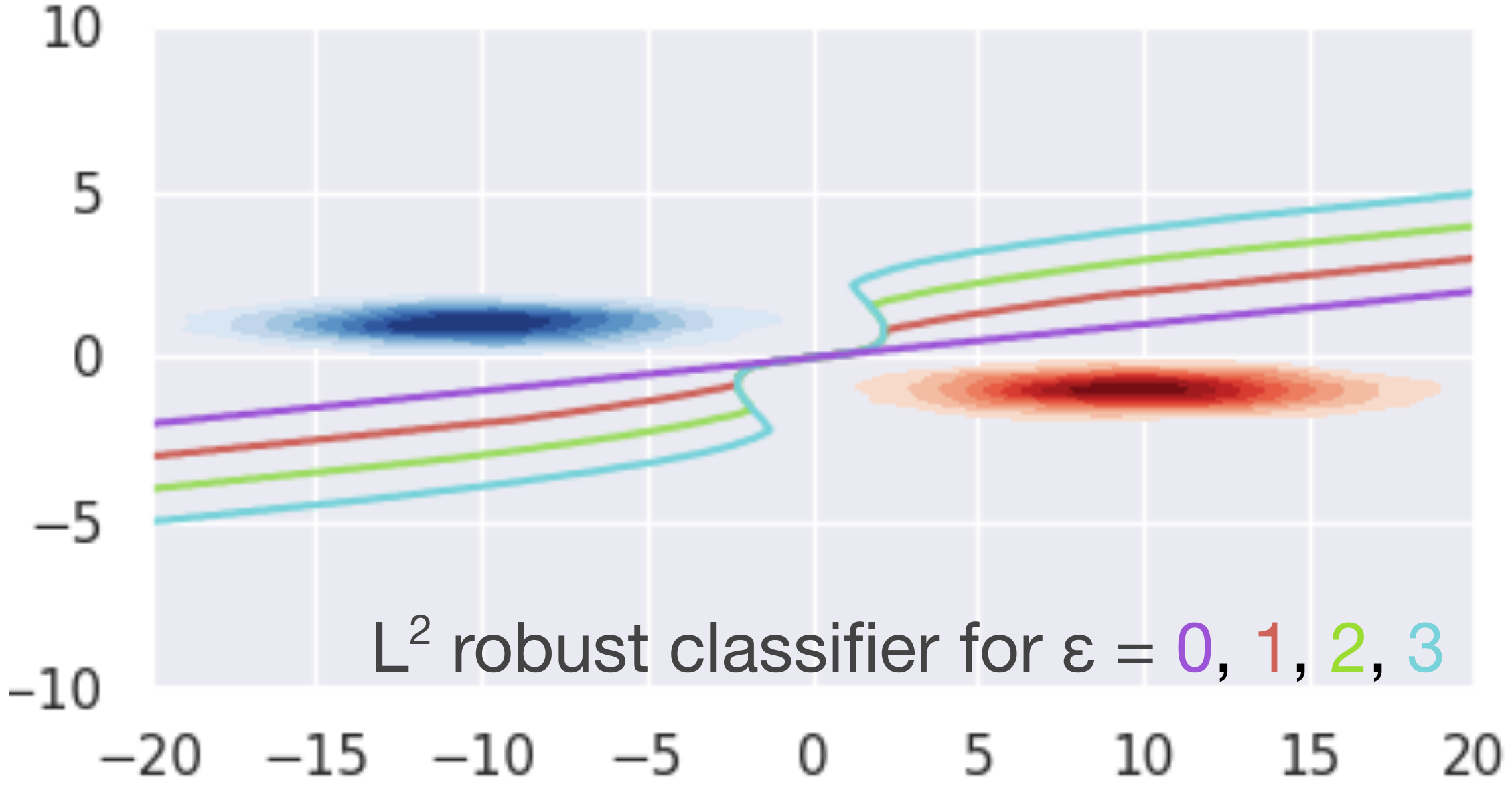}
\end{center}
  \caption{
  Decision boundary of our model at multiple levels of robustness on an example from~\citet{ilyas2019adversarial}.
  \label{fig:toy_adversarial}}
\end{figure}
\section{Experimental details of adversarial experiments}\label{app:adv}
Results for CIFAR-10 and ImageNet experiments comparing to state-of-the-art methods can be found in tables~\ref{table:cifar}, ~\ref{table:cifar-soa},  and~\ref{table:imagenet-soa}. Table~\ref{table:cifar} only includes results for Randomized Smoothing (RS). 

\begin{figure*}
\small
\begin{tabular}{c|l|rrrrrrrrrr|r}
    \toprule[1.5pt]
      $\sigma$ & \textbf{Method}& 0.0 & 0.25 & 0.5& 0.75 & 1.0&1.25 & 1.5 & 1.75 & 2.00 & 2.25 & ACR\\
    \midrule[2pt]
    \multirow{2}{*}{0.25} &  (Inductive) RS& 0.75 & 0.60 & 0.43 & 0.26 & 0.00 & 0.00 & 0.00 & 0.00 & 0.00 & 0.00 & 0.416\\
     & Meta-tailor RS& \textbf{0.80} & \textbf{0.66} & \textbf{0.48} & 0.29 & 0.00 & 0.00 & 0.00 & 0.00 & 0.00 & 0.00 & \textbf{0.452}\\
    \midrule[1pt]
    \multirow{2}{*}{0.50}  &  (Inductive) RS& 0.65 & 0.54 & 0.41 & 0.32 & 0.23 & 0.15 & 0.09 & 0.04 & 0.00 & 0.00 & 0.491\\
     & Meta-tailor RS& 0.68 & 0.57 & 0.45 & \textbf{0.33} & 0.23 & 0.15 & 0.08 & 0.04 & 0.00 & 0.00 & \textbf{0.542}\\
    \midrule[1pt]
    \multirow{2}{*}{1.00} &  (Inductive) RS& 0.47 & 0.39 & 0.34 & 0.28 & 0.21 & 0.17 & \textbf{0.14} & 0.08 & 0.05 & 0.03 & 0.458\\
    & Meta-tailor RS& 0.50 & 0.43 & 0.36 & 0.30 & \textbf{0.24} & \textbf{0.19} & \textbf{0.14} & \textbf{0.10} & \textbf{0.07} & \textbf{0.05} & \textbf{0.546}\\
    \bottomrule[2pt]
  
  \end{tabular}
  \caption{Percentage of points with certificate above different radii, and average certified radius (ACR) for on the CIFAR-10 dataset. Meta-tailoring improves the Average Certification Radius by $8.6\%,10.4\%,19.2\%$ respectively. Results for~\citet{cohen2019certified} are taken from~\citet{Zhai2020MACER} because they add more measures than the original work, with similar results. 
  \label{table:cifar}
  }
\end{figure*}

\begin{figure}
\small
\begin{tabular}{c|l|rrrrrrrrrr|r}
    \toprule[1.5pt]
      $\sigma$ & \textbf{Method}& 0.0 & 0.25 & 0.5& 0.75 & 1.0&1.25 & 1.5 & 1.75 & 2.00 & 2.25 & ACR\\
    \midrule[2pt]
    \multirow{4}{*}{0.25} & RandSmooth & 0.75 & 0.60 & 0.43 & 0.26 & 0.00 & 0.00 & 0.00 & 0.00 & 0.00 & 0.00 & 0.416\\
     & Salman & 0.74 & 0.67 & 0.57 & 0.47 & 0.00 & 0.00 & 0.00 & 0.00 & 0.00 & 0.00 & 0.538\\
     & MACER & \textbf{0.81} & \textbf{0.71} & \textbf{0.59} & 0.43 & 0.00 & 0.00 & 0.00 & 0.00 & 0.00 & 0.00 & \textbf{0.556}\\
     & Meta-tailored & 0.80 & 0.66 & 0.48 & 0.29 & 0.00 & 0.00 & 0.00 & 0.00 & 0.00 & 0.00 & 0.452\\
    \midrule[1pt]
    \multirow{4}{*}{0.50}  & RandSmooth& 0.65 & 0.54 & 0.41 & 0.32 & 0.23 & 0.15 & 0.09 & 0.04 & 0.00 & 0.00 & 0.491\\& Salman & 0.50 & 0.46 & 0.44 & 0.40 & 0.38 & 0.33 & 0.29 & 0.23 & 0.00 & 0.00 & 0.709\\
    & MACER & 0.66 & 0.60 & 0.53 & \textbf{0.46} & 0.38 & 0.29 & 0.19 & 0.12 & 0.00 & 0.00 & \textbf{0.726}\\
     & Meta-tailored & 0.68 & 0.57 & 0.45 & 0.33 & 0.23 & 0.15 & 0.08 & 0.04 & 0.00 & 0.00 & 0.542\\
    \midrule[1pt]
    \multirow{4}{*}{1.00} & RandSmooth & 0.47 & 0.39 & 0.34 & 0.28 & 0.21 & 0.17 & 0.14 & 0.08 & 0.05 & 0.03 & 0.458\\
    & Salman &  0.45 & 0.41 & 0.38 & 0.35 & \textbf{0.32} & 0.28 & \textbf{0.25} & \textbf{0.22} & \textbf{0.19} & \textbf{0.17} & 0.787\\
    & MACER & 0.45 & 0.41 & 0.38 & 0.35 & \textbf{0.32} & \textbf{0.29} & \textbf{0.25} & \textbf{0.22} &0.18 & 0.16 & \textbf{0.792}\\
    & Meta-tailored & 0.50 & 0.43 & 0.36 & 0.30 & 0.24 & 0.19 & 0.14 & 0.10 & 0.07 & 0.05 & 0.546\\
    \bottomrule[2pt]
  
  \end{tabular}
  \caption{Percentage of points with certificate above different radii, and average certified radius (ACR) for on the CIFAR-10 dataset, comparing with SOA methods. In contrast to pretty competitive results in ImageNet, meta-tailoring improves randomized smoothing, but not enough to reach SOA. It is worth noting that the SOA algorithms could also likely be improved via meta-tailoring.
  \label{table:cifar-soa}
  }
\end{figure}

\begin{figure}
\small
\begin{tabular}{c|l|rrrrrrr|r}
    \toprule[1.5pt]
      $\sigma$ & \textbf{Method}& 0.0 & 0.5& 1.0 & 1.5  & 2.0 & 2.5 & 3.0 & ACR\\
    \midrule[2pt]
    \multirow{4}{*}{0.25} & RandSmooth & 0.67 & 0.49 & 0.00 & 0.00 & 0.00 & 0.00 & 0.00 & 0.470\\
     & Salman & 0.65 & 0.56 & 0.00 & 0.00 & 0.00 & 0.00 & 0.00 & 0.528\\
     & MACER & 0.68 & \textbf{0.57} & 0.00 & 0.00 & 0.00 & 0.00 & 0.00 & \textbf{0.544}\\
     & Meta-tailored RS& \textbf{0.72} & 0.55 & 0.00 & 0.00 & 0.00 & 0.00 & 0.00 & 0.494\\
    \midrule[1pt]
    \multirow{4}{*}{0.50}  & RandSmooth& 0.57 & 0.46 & 0.37 & 0.29 & 0.00 & 0.00 & 0.00 & 0.720\\
     & Salman & 0.54 & 0.49 & \textbf{0.43} & \textbf{0.37} & 0.00 & 0.00 & 0.00 & 0.815\\
     & MACER & 0.64 & 0.53 & \textbf{0.43} & 0.31 & 0.00 & 0.00 & 0.00 & \textbf{0.831}\\
     & Meta-tailored RS& 0.66 & 0.54 & 0.42 & 0.31 & 0.00 & 0.00 & 0.00 & 0.819\\
    \midrule[1pt]
    \multirow{4}{*}{1.00} & RandSmooth & 0.44 & 0.38 & 0.33 & 0.26 & 0.19 & 0.15 & 0.12 & 0.863\\
    & Salman & 0.40 & 0.38 & 0.33 & 0.30 & \textbf{0.27} & \textbf{0.25} & \textbf{0.20} & 1.003\\
    & MACER & 0.48 & 0.37 & 0.34 & 0.30& 0.25 & 0.18 & 0.14 & 1.008\\
    & Meta-tailored RS& 0.52 & 0.45 & 0.36 & 0.31 & 0.24 & 0.20 & 0.15 & \textbf{1.032}\\
    \bottomrule[2pt]
  
  \end{tabular}
  \caption{Percentage of points with certificate above different radii, and average certified radius (ACR) for on the ImageNet dataset, including other SOA methods. Randomized smoothing with meta-tailoring are very competitive with other SOA methods, including having the biggest ACR for $\sigma=1$.
  \label{table:imagenet-soa}
  }
\end{figure}

\paragraph{Hyper-parameters and other details of the experiments} there are just three hyper-parameters to tweak for these experiments, as we try to remain as close as possible to the experiments from~\citet{cohen2019certified}. In particular, we tried different added noises $\nu\in[0.05,0.1,0.2]$ and tailoring inner steps $\lambda\in[10^{-3},10^{-2},10^{-1},10^0]$ for $\sigma=0.5$. To minimize compute, we tried these settings by tailoring (not meta-tailoring) the original model and seeing its effects on the smoothness and stability of optimization, choosing $\nu=0.1,\lambda=0.1$ (the fact that they're the same is a coincidence). We chose to only do a single tailoring step to reduce the computational burden, since robustness certification is very expensive, as each example requires 100k evaluations (see below). For simplicity and to avoid excessive tuning, we chose the hyper-parameters for $\sigma=0.5$ and copied them for $\sigma=0.25$ and $\sigma=1$. As mentioned in the main text, $\sigma=1$ required initializing our model with that of~\citet{cohen2019certified} (training wasn't stable otherwise), which is easy to do using \method.

In terms of implementation, we use the codebase of~\citet{cohen2019certified}(\url{https://github.com/locuslab/smoothing}) extensively, modifying it only in a few places, most notably in the architecture to include tailoring in its forward method. It is also worth noting that we had to deactivate their disabling of gradients during certification, because tailoring requires gradients. We chose to use the first-order version of~\method\, which made it much easier to keep our implementation very close to the original. It is likely that doing more tailoring steps would result in better performance.

We note that other works focused on adversarial examples, such as~\citet{Zhai2020MACER,salman2019provably}, improve on~\citet{cohen2019certified} by bigger margins. However, tailoring and meta-tailoring can also improve a broad range of algorithms in applications outside of adversarial examples. Moreover, they could also improve these new algorithms further, as these algorithms can also be tailored and meta-tailored.

\paragraph{Compute requirements}

For the CIFAR-10 experiments building on~\citet{cohen2019certified}, each training of the meta-tailored method was done in a single GTX 2080 Ti for 6 hours. Certification was much more expensive (10k examples with 100k predictions each for a total of $10^9$ predictions). Since certifications of different images can be done in parallel, we used a cluster consisting of 8 GTX 2080 Ti, 16 Tesla V-100, and 40 K80s (which are about 5 times slower), during 36 hours.

For the ImageNet experiments, we fine-tuned the original models for 5 epochs; each took 18 hours on 1 Tesla V-100. We then used 30 Tesla V-100 for 20 hours for certification.

\end{document}